\documentclass{article}



\usepackage[final, main]{neurips_2026}

\usepackage[utf8]{inputenc} 
\usepackage[T1]{fontenc}    
\usepackage{hyperref,comment,booktabs}       
\usepackage{url,floatrow}            
\usepackage{amsfonts}       
\usepackage{nicefrac}       
\usepackage{microtype}      
\usepackage{color,bm}
\newfloatcommand{capbtabbox}{table}[][\FBwidth]
\usepackage{amsthm}
\usepackage{amssymb}
\usepackage{blindtext}
\usepackage{xcolor} 
\usepackage{mathrsfs}
\usepackage{amsmath}
\usepackage{graphicx} 
\usepackage{multirow}
\usepackage{tikz}
\usepackage{pgfplots}
\pgfplotsset{compat=1.17}
\usetikzlibrary{shapes.geometric, arrows.meta, positioning, shapes,fadings, decorations.pathmorphing, backgrounds, shadows,patterns,calc,3d,bending}

\newcommand{\Heven}{\mathcal{H}_{\text{even}}}
\newcommand{\Hodd}{\mathcal{H}_{\text{odd}}}
\usepackage{algpseudocode}
\usepackage{algorithm}
\newtheorem{theorem}{Theorem}

\newtheorem{lemma}{Lemma}

\newtheorem{definition}{Definition}

\newtheorem{remark}{Remark}

\newtheorem{axiom}{Axiom}

\title{The Urysohn Ladder: Recursive Metric Contraction for Scalable Continual Learning}
%

\author{%
  Xin~Li\thanks{This work was partially supported by NSF IIS-2401748 and BCS-2401398.
  The author acknowledges the use of large language model systems (Claude
  Opus 4.7 by Anthropic, ChatGPT 5 by OpenAI, and Gemini 2.5 Pro by Google) as
  assistive tools during the drafting process. All mathematical content has
  been reviewed and the author takes full responsibility for its correctness.} \\
  Department of Computer Science\\
  University at Albany\\
  Albany, NY 12222 \\
  \texttt{xli48@albany.edu} \\
  }

\begin{document}
\bibliographystyle{plain}
\maketitle

 
\begin{abstract}
Continual learning systems face a fundamental geometric obstacle: as experience accumulates on a fixed-capacity manifold, covering numbers grow linearly with time, eventually forcing representational overlap and catastrophic interference. Prevailing approaches attack this problem by \emph{expansion} - projecting into higher-dimensional spaces via kernels, overparameterization, or replay. We argue the solution is the opposite: \emph{contraction}. We formalize abstraction as the \textbf{Urysohn Ladder}, a hierarchy of quotient maps that recursively collapse validated metric neighborhoods into compact tokens, converting unbounded ambient-space search into bounded navigation on a low-dimensional intrinsic scaffold. Geometrically, each collapsed token acts as a shortcut - a region of extreme metric contraction that bridges distant experiences, much like a wormhole in the representational manifold. We establish four results that collectively guarantee \emph{separability} (metric contraction renders nonlinearly entangled structure linearly separable at each quotient level, and this separability propagates faithfully through the entire hierarchy), \emph{bounded capacity} (covering numbers remain $O(1)$ per quotient level, independent of stream length), \emph{stability} (parity-partitioned flow/scaffold subspaces enable unbounded plasticity without catastrophic interference), and \emph{scalability} (inference cost scales with quotient distance, not ambient distance). We validate each claim empirically with pretrained models and real-world datasets. Moreover, we demonstrate the potential of Urysohn Ladder for scalable continual learning via scaffold amortization.
\end{abstract}

\section{Introduction}

Continual learning systems operate under a fundamental geometric tension: experience accumulates without bound, yet the representational capacity of any fixed-dimensional manifold is finite.
Classical results formalize this limitation sharply: in a fixed $d$-dimensional space, the number of stably separable regions is bounded by $O(d)$ \cite{cover1965geometry,vapnik1998statistical}, and the covering number of a trajectory of length $L$ grows as $\Theta(L/\epsilon)$ \cite{kolmogorov1959entropy,shalev2014understanding}.
As experience elongates, the trajectory eventually exceeds the covering capacity of the manifold, forcing representational overlap and catastrophic interference \cite{mccloskey1989catastrophic,french1999catastrophic}.
Prevailing approaches treat this as a \emph{packing problem}, attacking it by expansion: kernel methods project into higher-dimensional feature spaces \cite{shawe2004kernel}, overparameterized networks increase capacity through width \cite{allen2019learning}, and replay methods expand the effective support of the training distribution \cite{parisi2019continual}. These strategies succeed in finite regimes but are governed by an unsustainable scaling law - representational resources must grow at least linearly with experience. In open-ended environments, the curse of dimensionality ensures that the volume required to separate entangled manifolds grows exponentially, eventually overwhelming any finite expansion \cite{bellman1957dynamic}. 
We argue the solution is the \emph{opposite} of expansion. Rather than increasing representational volume, we actively \emph{contract} the metric over validated submanifolds, collapsing them into compact tokens via quotient maps. The contraction view inverts the standard paradigm: instead of adding dimensions to untangle complex sets, we shrink the metric until the sets become topologically disjoint. A classical result, Urysohn's Lemma \cite{willard2012general}, guarantees that a stable separating function exists regardless of the ambient dimension (Fig.~\ref{fig:CoD}). 
 
\begin{figure}
\centering
\resizebox{.75\linewidth}{!}{%
    \begin{tikzpicture}[font=\sffamily, >=LaTeX]
 
    \definecolor{classA}{RGB}{65, 105, 225} 
    \definecolor{classB}{RGB}{220, 20, 60}  
 
    \node (p1) at (-1,0) {};
    
    \filldraw[classA!20, draw=classA, thick, fill opacity=0.6] 
        (-2,-1.5) to[out=90,in=180] (-1,0.5) to[out=0,in=180] (1, -0.5) 
        to[out=0,in=270] (2,1) to[out=90,in=0] (0, 2) 
        to[out=180,in=90] (-2.5, 0.5) to[out=270,in=180] (-2,-1.5);
    
    \filldraw[classB!20, draw=classB, thick, fill opacity=0.6] 
        (-0.5,-2) to[out=0,in=270] (1.5,-0.5) to[out=90,in=0] (0.5, 1.5) 
        to[out=180,in=0] (-1.5, -0.5) to[out=180,in=90] (-2.5, -1.5) 
        to[out=270,in=180] (-1, -2.5) to[out=0,in=270] (-0.5,-2);
 
    \node[circle, fill=black, inner sep=1.5pt] (contact) at (-0.2, -0.2) {};
    \node[above right, scale=0.7] at (contact) {Contact $\delta \to 0$};
 
    \node[above=2.2cm of p1, align=center] (title1) {\textbf{1. The Geometric Curse}\\(Entangled in High-D)};
    \node[below=2.5cm of p1, align=left, scale=0.75] (desc1) {
        \textbf{Problem:} Manifolds weave.\\
        Separation Energy $\propto 1/\delta \to \infty$.\\
        Needs exponential data to separate
    };
 
    \draw[->, line width=1mm, color=black!70] (2.5, 0) -- (6.5, 0) 
        node[midway, above, scale=0.8] {Metric Collapse}
        node[midway, below, scale=0.8] {$\Psi_\rho$ (Quotient Map)};
 
    \node (p2) at (8,0) {};
 
    \filldraw[classA!80, draw=classA, thick] 
        (6.5, 1.5) circle (0.4cm) node[white, font=\bfseries] {$A$};
    
    \filldraw[classB!80, draw=classB, thick] 
        (9.5, -1.5) circle (0.4cm) node[white, font=\bfseries] {$B$};
 
    \draw[<->, dashed, thick] (6.9, 1.2) -- (9.1, -1.2) node[midway, above right, font=\small] {Margin $\delta > 0$};
 
    \shade[left color=classA!50, right color=classB!50, opacity=0.5] 
        (6.5, 1.5) -- (9.5, -1.5) -- (10.5, -1.5) -- (7.5, 1.5) -- cycle;
    \node[rotate=-45, scale=0.7] at (8, -0.5) {Urysohn Function $f$};
 
    \node[above=2.2cm of p2, align=center] (title2) {\textbf{2. The Topological Blessing}\\(Disjoint Sets)};
    \node[below=2.5cm of p2, align=center, scale=0.75] (desc2) {
        \textbf{Insight:} Collapse creates margin.\\
        Separation Energy $\propto 1/\delta < \infty$.\\
        Function $f$ exists regardless of Dimension $D$.
    };
 
    \begin{scope}[shift={(2, -5.)}]
        \draw[->] (0,0) -- (5,0) node[right] {Connectivity (Margin $\delta$)};
        \draw[->] (0,0) -- (0,2.6) node[above] {Energy Cost (Lipschitz)};
        
        \draw[domain=0.2:4.5, smooth, variable=\x, thick, orange] plot ({\x}, {0.5/\x});
        
        \fill[red] (0.2, 2.5) circle (2pt) node[right] {Curse (Entangled)};
        \fill[green!60!black] (3.0, 0.16) circle (2pt) node[above right] {Urysohn (Collapsed)};
    \end{scope}
    \end{tikzpicture}
    }
    \vspace{0.1in}
\caption{Turing Dimensionality from the curse of Lipschitz Continuity to the blessing of Connectivity via metric collapse.
Left (Geometric Curse): In the ambient high-dimensional space, class manifolds (blue and red) are entangled, effectively touching at the decision boundary (margin $\delta \to 0$). Separating these sets requires a function with infinite slope, leading to the metabolic singularity where learning diverges.
Right (Topological Blessing): Instead of estimating density in the volume, the system applies a metric collapse operator ($\Psi_\rho$), contracting the manifolds toward their centroids (quotient map), which creates a synthetic margin $\delta > 0$.
The bottom plot illustrates the thermodynamic phase transition: the system moves from a high-energy entangled state to a low-energy separated state.}
\vspace{-0.2in} 
\label{fig:CoD}
\end{figure}
 
The core mechanism is \emph{recursive metric contraction}: validated sub-trajectories on the temporal manifold are collapsed via quotient maps, producing a hierarchy $\mathcal{M}_0 \xrightarrow{q_0} \mathcal{M}_1 \xrightarrow{q_1} \cdots \xrightarrow{q_D} \mathcal{M}_D$ of progressively simpler spaces. Geometrically, each collapsed token acts as a wormhole - a region of extreme curvature that brings distant events into immediate proximity. Inference then proceeds by navigating the contracted compact scaffold rather than scanning the original trajectory. We reframe the curse of dimensionality as a \emph{Curse of Lipschitz Continuity} \cite{bach2017breaking}: in the raw input stream, entangled manifolds require infinite energy to separate via continuous functions \cite{vonLuxburgBousquet2004}; quotient contraction creates the margin needed for finite-energy separation.
We make our geometric intuition precise through four formal results:
1) \textbf{Separability} (Theorems~\ref{thm:topological_collapse} and~\ref{thm:urysohn_quotient}): metric contraction renders nonlinearly entangled structure linearly separable via quotient topology, and separability propagates faithfully through the entire quotient hierarchy. 
2) \textbf{Stability} (Theorem~\ref{thm:parity_stability}): parity-partitioned flow/scaffold subspaces enable unbounded plasticity without catastrophic interference \cite{mccloskey1989catastrophic}.
3) \textbf{Bounded Capacity} (Theorem~\ref{thm:bounded_capacity}): covering numbers remain $O(1)$ per quotient level, independent of stream length $L$.
4) \textbf{Scalability} (Theorem~\ref{thm:urysohn_quotient}): inference cost scales with quotient distance, not ambient distance \cite{gromov1999metric}.
Sections~\ref{sec:geometric_framework}-\ref{sec:recursive_depth} develop the theoretical framework. Section~\ref{sec:empirical} validates each theorem empirically on real-world data (e.g., MNIST, CIFAR-10) and pretrained models (e.g., GPT-2, ResNet-18). Additional experimental results on the scalability of recursive metric contraction are provided in the Appendix.

\section{Geometric Framework}
\label{sec:geometric_framework}

\noindent\textbf{Setup.}
We model experience as a trajectory $\gamma(t)$ on a Riemannian manifold $(\mathcal{M}, g)$, where $g$ denotes the metric on $\mathcal{M}$ \cite{gromov1999metric}.
In our view, learning reshapes the geometry of the space in which inference occurs, shortening geodesic distances between causally related regions \cite{bengio2013representation}.
The covering number of a flat trajectory grows as $N(\epsilon, \mathcal{M}) \propto L$, leading to catastrophic interference when it exceeds hardware capacity \cite{mccloskey1989catastrophic}.
The key question is: \emph{how should the geometry of $\mathcal{M}$ be transformed to prevent this?}
Our approach is inspired by a classical result known as Urysohn's Lemma in point-set topology \cite{willard2012general}:
 
\begin{lemma}[Urysohn's Lemma]
\label{lem:urysohn}
Let $X$ be a normal topological space. Let $A$ and $B$ be two disjoint closed subsets of $X$.
Then there exists a continuous function $f: X \to [0, 1]$ such that:
$f(x) = 0 \forall x \in A ~\text{and}~ f(x) = 1 \forall x \in B$.
The function $f$ is often referred to as a \textit{separating function}.
\end{lemma}
 
\begin{remark}[From topological separability to constructive abstraction]
Lemma~\ref{lem:urysohn} provides an existence guarantee: once two hypotheses occupy disjoint closed
supports, there always exists a continuous \emph{separating function} $f$ that can act as a stable gate
between them. 
Our constructive answer has two parts:
1) \textbf{metric contraction via quotienting}: Given a validated subset $U \subset \mathcal{M}$, collapse it to a single point via a quotient map, contracting internal distances to zero. Metric contraction simultaneously changes the topology ($\beta_1 \to \beta_0$) and compresses covering numbers, yielding separability.
2) \textbf{parity alternation as a separator pump}: An expansion stroke (odd phase) explores the representation and identifies valid candidates; a contraction stroke (even phase) applies contraction to those candidates. The orthogonality between phases ensures that each contraction increases the margin $\delta$ without disturbing previously stored tokens, yielding stability.
\end{remark}

\subsection{Metric Contraction via Topological Quotienting}
 
\noindent\textbf{Metric Contraction via Quotienting}
In our geometric framework, the gap left by the non-constructive Urysohn's Lemma is closed by \emph{quotienting} \cite{burago2001course}: an explicit operation that \emph{creates} separated supports by contracting validated recurrent structure into single tokens. Rather than
treating $f$ as the primitive, we treat the formation of the closed sets to be separated as the
primitive and implement it geometrically via a controlled metric contraction on selected
subsets of the manifold (e.g., Equiangular Tight Frame \cite{papyan2020prevalence}).
 
\begin{definition}[Metric Contraction Operator]
\label{def:metric_contraction}
Let $U \subset \mathcal{M}$ denote a subset of the temporal manifold corresponding to a validated structure (e.g., a recurrent or causally closed trajectory segment).
A \textbf{condensation operator} $\Psi$ induces a modified metric $\Psi(g)$ such that, for all $x,y \in U$,
$d_{\Psi(g)}(x,y) \to 0$.
Topologically, $\Psi$ defines a \textbf{quotient map} or a \textbf{condensation operator}
$q : \mathcal{M} \rightarrow \mathcal{M}/\!\sim$,
where all points in $U$ are identified with a single equivalence class $p^\ast$.
We refer to $p^\ast$ as a \emph{token}.
\end{definition}
 
\noindent\textbf{Toy Example.}
Consider a trajectory $\gamma$ of $L=6$ token-level hidden states encoding the phrase ``the cat sat on the mat.'' Before condensation, each token occupies a distinct point on $\mathcal{M}_0$; the geodesic distance from the first to the last is $d_g \propto L$, and covering the trajectory requires $N(\epsilon) = O(L/\epsilon)$ balls. Now suppose the system has validated this trajectory as a self-consistent recurrent structure (a closed cycle $\beta_1$). The condensation operator $\Psi$ collapses all six points into a single equivalence class $p^\ast$ (a token) in the quotient $\mathcal{M}_1 = \mathcal{M}_0/{\sim}$. In $\mathcal{M}_1$, the internal diameter of that region is exactly zero and the covering number is 1, while distances to \emph{other} tokens are preserved. 
 
\noindent\textbf{The Dual View: Topology and Metric as One Operation.}
The condensation operator $\Psi$ admits two complementary descriptions that are formally equivalent. In the \emph{topological} view, a validated recurrent trajectory, a closed 1-cycle $\beta_1$ in the representation, is collapsed into a connected component $\beta_0$: homological dimension decreases by one. In the \emph{metric} view, the internal geodesic diameter of the trajectory collapses from $O(L)$ to zero, and the covering number for that region drops from $O(L/\epsilon)$ to $1$. These are the same operation seen through different lenses: the quotient map $q$ simultaneously kills a cycle (changing topology) and contracts the metric (shrinking diameter). 
 
\noindent\textbf{Wormhole Interpretation.}
Under the original metric $g$, the endpoints $x_{t_{\text{start}}}$ and $x_{t_{\text{end}}}$ of a long temporal sequence may be separated by a large geodesic distance.
Metric contraction introduces a region of extreme curvature that effectively pinches the manifold, bringing these distant points into immediate proximity \cite{gromov1999metric,memoli2007gromov}.
From the perspective of inference, metric contraction creates a \emph{topological shortcut}: a path of negligible length that bypasses the original temporal extent.
We use the term ``wormhole'' as an intuitive description of such a shortcut effect \cite{chung1997spectral,coifman2006diffusion}, emphasizing that inference cost is reduced by deforming the metric rather than by traversing the original trajectory.
 
\noindent\textbf{Hierarchical Quotient Spaces.}
Condensation can be applied recursively, producing a hierarchy of progressively simplified manifolds:
$\mathcal{M}_0 \xrightarrow{q_0} \mathcal{M}_1 \xrightarrow{q_1} \dots \xrightarrow{q_D} \mathcal{M}_D$.
At each level $k$, the quotient manifold $\mathcal{M}_k$ has strictly smaller effective diameter and covering number than $\mathcal{M}_{k-1}$.
A token at level $k$ represents a potentially long temporal segment of $\mathcal{M}_0$, so an unbounded history can be represented by a bounded number of tokens at sufficiently high abstraction.
While $\mathcal{M}_0$ may have diverging volume, the upper-level quotients remain compact \cite{burago2001course}, and inference operates by traversing these compact spaces, which we call ``Urysohn Ladder'' (Fig. \ref{fig:log_scaling}).
 
\paragraph{From the Curse to the Blessing of Dimensionality.}
Recursive quotienting resolves the Curse of Dimensionality and the plasticity-stability dilemma \cite{de2021continual} by attacking their shared root cause: \emph{overlapping supports} in representation space. In the entangled regime, distance-based retrieval is unstable and gradient updates overwrite previously acquired structure \cite{bengio2013representation}. The quotient operator $\Psi$ converts overlapping supports into disjoint closed sets, enabling stable gating. Geometrically, it replaces ambient search with navigation on an intrinsic scaffold; dynamically, it localizes learning to the fiber of the active token, so updates refine internal structure without perturbing other tokens. The same mechanism that lowers \emph{inference cost} (from search to navigation) simultaneously lowers \emph{learning cost} (from global rewrites to localized updates), turning high ambient dimension from a liability into an asset for routing \cite{mccloskey1989catastrophic}.

\subsection{The Parity Alternation Principle}
 
Separating complex, entangled manifolds requires more than a single geometric projection - it requires an iterative construction. We formalize our intuition as the \textbf{parity alternation principle}: the state space decomposes into two complementary subspaces with distinct computational roles \cite{mcclelland_why_1995}. An \emph{odd-parity flow} ($\Hodd$) supports plastic search and exploratory neighborhood selection, while an \emph{even-parity scaffold} ($\Heven$) validates and consolidates structure into stable tokens. The connection to the constructive proof of Urysohn's Lemma \cite{urysohn_zum_1925} and the reason why alternation is structurally necessary, is developed below after the formal definitions.
 
\begin{definition}[Parity Partitioning]
The \textbf{parity alternation principle} posits that the cognitive state space $\mathcal{M}$ is a \emph{homologically partitioned complex} admitting an (approximate) parity decomposition
$\mathcal{M} \approx \Hodd \oplus \Heven$,
where $\Hodd$ denotes the odd-parity subspace supporting transient dynamical features and $\Heven$ denotes the even-parity subspace supporting consolidated structural features.
\end{definition}
 
\begin{axiom}[Functional Conjugacy]
The parity alternation principle asserts a functional dichotomy between two complementary computational modes, grounded in the two strokes of the Urysohn separator construction:
1) \textbf{Flow ($\Hodd$).} A high-entropy \emph{expansion} phase in which the system explores candidate neighborhoods, traces trajectories, and identifies boundary failures;
2) \textbf{Scaffold ($\Heven$).} A low-entropy \emph{contraction} phase in which validated recurrent structure is stabilized as invariant units.
\end{axiom}

\begin{remark}
We adopt the notation $\Hodd$ and $\Heven$ as shorthand reflecting the dominant homological instance ($\beta_1 \to \beta_0$), while noting that the functional dichotomy is derived from the Urysohn construction rather than from general properties of even- and odd-dimensional homology groups.
The canonical topological substrate of $\Hodd$ is $\beta_1$: closed loops that encode temporal recurrence, causal closure, and exploratory composition. Operationally, the phase aligns with \emph{active inference} \cite{friston2017active} and corresponds to the open-set proposal step ($U_r$) in the constructive Urysohn proof.  The canonical topological substrate of $\Heven$ is $\beta_0$: connected components that encode discrete, reusable tokens. Operationally, the alternating phase aligns with \emph{memory consolidation} \cite{squire2015memory} and corresponds to the closure-enforcement step ($\overline{U_r} \subset U_s$) in the constructive Urysohn proof.
\end{remark}
 
\begin{axiom}[Parity-Inverting Condensation Operator]
Learning is defined not as continuous deformation within a fixed space, but as a \textbf{parity-inverting map}
$\Psi:\Hodd(\mathcal{M}) \rightarrow \Heven(\mathcal{M})$,
which performs \textbf{topological condensation}: a validated odd-parity cycle is collapsed into an even-parity token. Equivalently, $\Psi$ implements a quotienting step that converts a \emph{process} into a \emph{thing}.
\end{axiom}

\begin{remark}
Each expansion-contraction cycle increases the margin $\delta$ between competing class supports. Before the cycle, interleaved class manifolds may have a margin $\delta \approx 0$. The odd phase identifies \emph{where} they interleave (boundary failures). The even phase collapses the validated interior of each class, pulling its support away from the boundary. After $k$ cycles, $\delta$ grows monotonically. Eventually, the class supports become topologically disjoint and at that point Urysohn's Lemma activates: the separating function $f$ is guaranteed to exist with finite Lipschitz constant $L \propto 1/\delta$. Parity alternation does not invoke Urysohn's Lemma; it \emph{manufactures the precondition} that makes the lemma applicable. Neither stroke alone suffices: expansion without contraction yields exploration that never consolidates; contraction without expansion yields premature closure over bad structure. The alternation is the minimal mechanism that constructively achieves topological disjointness.
\end{remark} 

\noindent\textbf{Mechanism: Parity alternation as a separator pump.}
The connection between parity alternation and Urysohn's Lemma is not analogical but structural \cite{gentner_structure-mapping_1983}. The constructive proof of Urysohn's Lemma builds the separating function $f$ by an iterative procedure: for each dyadic rational $r$, find an open set $U_r$ (expansion) such that the closures nest: $\overline{U_r} \subset U_s$ whenever $r < s$ (contraction). The alternation between proposing open neighborhoods and enforcing closure containment is not a stylistic choice in the proof but a structural necessity.
Parity alternation maps directly onto the above construction. The \emph{odd phase} (flow/$\Hodd$) is the expansion stroke: the system explores, proposes candidate neighborhoods, and measures where the current representation fails - points that should be separated but are not yet. In neural terms, odd phase is the forward pass exposing high-loss boundary failures, active inference probing new trajectories \cite{friston2017active}. The \emph{even phase} (scaffold/$\Heven$) is the contraction stroke: the system takes closures, validates proposed neighborhoods, and consolidates them into compact tokens by contracting the metric within each validated region. In neural terms, even phase is offline consolidation, collapsing a trajectory into a reusable chunk \cite{squire2015memory}.

\noindent\textbf{Implication: resolving stability-plasticity by transversalization.}
In standard neural networks, plasticity and stability compete for the same synaptic substrate \cite{mccloskey1989catastrophic}. Under parity alternation, they are architecturally decoupled: plasticity occurs in $\Hodd$ via exploratory deformation of flows, while stability is maintained in $\Heven$ as invariant token structure that gates retrieval. If condensation drives learning directions to be transverse to the existing scaffold, interference vanishes:
$\langle \nabla \Psi(\gamma_{\text{new}}), \nabla S_{\text{old}} \rangle \rightarrow 0$.
The system can therefore sustain open-ended plasticity in the odd phase while preserving token-level invariants in the even phase (formalized in Theorem~\ref{thm:parity_stability}).

\section{Main Results}
 
The three results below are consequences of the single expansion-contraction mechanism developed in Section~\ref{sec:geometric_framework}, each isolating a different guarantee. Parity alternation constructs disjointness (enabling separability), metric contraction bounds covering numbers (enabling bounded capacity), and the orthogonal phase structure prevents interference (enabling stability). Each theorem isolates one face of the expansion-contraction mechanism to make the guarantee precise and falsifiable.


\noindent\textbf{The Topological Collapse Separability Theorem}
The standard framing of high-dimensional learning treats separability as a geometric curse requiring exponential data \cite{bellman1957dynamic}. We argue the obstacle is not volume but \emph{connectivity}: Urysohn's Lemma guarantees that separability requires only a continuous deformation that renders supports disjoint, not an increase in dimension.
 
\begin{theorem}[Quotient Collapse Preserves Separability]
\label{thm:topological_collapse}
Let $\mathcal{M}$ be a normal topological space and let $A,B\subset \mathcal{M}$ be disjoint closed sets.
By Urysohn’s lemma, there exists a continuous $f:\mathcal{M}\to[0,1]$ such that $f(A)=\{0\}$ and $f(B)=\{1\}$.
Define an equivalence relation $x\sim_f y \iff f(x)=f(y)$ and let $q:\mathcal{M}\to\tilde{\mathcal{M}}:=\mathcal{M}/\!\sim_f$ be the quotient map.
Then:
1) The images $q(A)$ and $q(B)$ are distinct singleton points in $\tilde{\mathcal{M}}$ (the equivalence classes at levels $0$ and $1$).
2) There exists a unique continuous $\bar f:\tilde{\mathcal{M}}\to[0,1]$ such that $f=\bar f\circ q$, and $\bar f(q(A))=0$, $\bar f(q(B))=1$.
3) Consequently, $q(A)$ and $q(B)$ are separable by the threshold rule $\mathbf{1}[\bar f(\cdot) > 1/2]$.
\end{theorem}
 
Theorem~\ref{thm:topological_collapse} establishes that metric contraction fundamentally alters the separability regime identified by Cover \cite{cover1965geometry}: collapsing validated submanifolds into quotient points deforms $\mathcal{M}_0$ into a compact hierarchy $\{\mathcal{M}_k\}$ on which inference proceeds via short geodesic traversals rather than ambient-space search. Paths that stay within a single class region become shorter. Paths that cross boundaries become relatively longer. The geodesics are deformed so they preferentially follow the grain of the classification, threading along class regions rather than cutting across them.

\noindent\textbf{The Parity-Partitioned Stability Theorem}
While Theorem~\ref{thm:topological_collapse} guarantees that a single manifold can be collapsed to achieve linear separability, it does not prevent the metric distortion from interfering with prior stored memories \cite{french1999catastrophic}. The Parity Alternation Principle (Section~\ref{sec:geometric_framework}) addresses the interference by decomposing the state space into orthogonal subspaces with segregated update rules. We now formalize the decomposition and its stability guarantee.
 
\begin{definition}[Orthogonal Parity Decomposition]
Let the cognitive state space be a Riemannian product manifold $\mathcal{M} = \mathcal{M}_{\text{fast}} \times \mathcal{M}_{\text{slow}}$ with metric $g = g_{\text{fast}} \oplus g_{\text{slow}}$.
Let $\theta = [\theta_F, \theta_S]$ denote the parameter vectors governing the geometry of each subspace.
\end{definition}
 
The product structure ensures that gradient updates in one subspace have zero projection onto the other. The following theorem formalizes the stability guarantee.
 
\begin{theorem}[Parity-Partitioned Stability]
\label{thm:parity_stability}
Let parameters decompose as $\theta=(\theta_F,\theta_S)\in\Theta_F\times\Theta_S$,
and suppose the system alternates two update phases:
$\text{(Flow)}:\ \Delta\theta_S=0,
\text{(Scaffold)}:\ \Delta\theta_F=0$.
Assume the metric on parameter space is block-diagonal,
$g = g_F \oplus g_S$, so that the induced inner product satisfies
$\langle (u_F,0),(0,v_S)\rangle_g = 0$.
Then the cross-interference term between phases vanishes:
$\langle \Delta\theta^{(F)}, \Delta\theta^{(S)}\rangle_g = 0$.
In particular, if a memory functional $R(\theta_S)$ depends only on $\theta_S$,
then Flow-phase updates leave $R$ invariant: $R(\theta_F+\Delta\theta_F,\theta_S)=R(\theta)$.
\end{theorem}
 
Theorem~\ref{thm:parity_stability} provides the formal guarantee that parity-segregated updates eliminate cross-task interference. In standard networks, plasticity and stability compete within a shared parameter space; parity alternation decouples them architecturally. Because metric collapse operates entirely within the high-entropy flow space ($\Hodd$), the low-entropy scaffold ($\Heven$) remains metrically invariant and updates refine task-specific flow structure without perturbing stored tokens. We empirically validate zero-interference guarantee in Experiment~3 (Section~\ref{sec:empirical}).
 
\begin{figure}[h]
\centering
\resizebox{\linewidth}{!}{%
\begin{tikzpicture}[
    scale=0.9, transform shape,
    manifold/.style={
        draw=gray!50, 
        fill=gray!10, 
        rounded corners, 
        minimum height=1.2cm, 
        inner sep=5pt,
        align=center
    },
    traj/.style={
        thick, 
        decorate, 
        decoration={snake, amplitude=1pt, segment length=5pt},
        color=blue!70
    },
    condensed_traj/.style={
        very thick,
        color=blue!90!black,
        smooth
    },
    raw_point/.style={circle, fill=blue!50, inner sep=1pt},
    condensed_point/.style={diamond, fill=red!70, inner sep=2pt},
    map_arrow/.style={->, -{Stealth[scale=1.2]}, thick, color=red!80!black},
    contraction_cone/.style={
        fill=red!30, 
        opacity=0.4, 
        draw=none
    },
    dim_line/.style={
        |<->|, 
        thick, 
        >=Stealth,
        color=black!70
    },
    odd_phase/.style={
        fill=orange!20,
        draw=orange!60!black,
        rounded corners=3pt,
        thick,
        minimum width=2.8cm,
        minimum height=0.9cm,
        align=center,
        font=\footnotesize
    },
    even_phase/.style={
        fill=blue!15,
        draw=blue!60!black,
        rounded corners=3pt,
        thick,
        minimum width=2.8cm,
        minimum height=0.9cm,
        align=center,
        font=\footnotesize
    },
    pump_arrow/.style={
        ->, -{Stealth[scale=1]},
        very thick
    }
]
 
 
\node[manifold, minimum width=12cm] (M0) at (0,0) {};
\node[below right, gray] at (M0.north west) {$\mathcal{M}_0$ (Base Manifold)};
\draw[thick, red!70, smooth] (-5.2, 0.15) to[out=20,in=160] (-3, -0.1) to[out=-20,in=200] (-1, 0.2) to[out=20,in=160] (1, -0.15) to[out=-20,in=200] (3, 0.1) to[out=20,in=160] (5.2, -0.1);
\draw[thick, blue!70, smooth] (-5.2, -0.15) to[out=-20,in=200] (-3, 0.1) to[out=20,in=160] (-1, -0.2) to[out=-20,in=200] (1, 0.15) to[out=20,in=160] (3, -0.1) to[out=-20,in=200] (5.2, 0.1);
\node at (0, -0.45) {\footnotesize Entangled class manifolds ($\delta \approx 0$)};
\draw[dim_line] (-5.5, -0.9) -- (5.5, -0.9) node[midway, below] {Linear Diameter $\propto L$};
 
\draw[<->, orange!80!black, thick] (5.8, -0.15) -- (5.8, 0.15);
\node[right, orange!80!black, font=\scriptsize] at (5.9, 0) {$\delta_0 \!\approx\! 0$};

\node[manifold, minimum width=7cm] (M1) at (0, 3.5) {};
\node[below right, gray] at (M1.north west) {$\mathcal{M}_1$ (Quotient Manifold)};
\coordinate (c1_1) at (-2.5, 3.7);
\coordinate (c1_2) at (0, 3.3);
\coordinate (c1_3) at (2.5, 3.7);
\fill[red!70] (-2.5, 3.7) circle (3pt);
\fill[red!70] (2.5, 3.7) circle (3pt);
\fill[blue!70] (0, 3.3) circle (3pt);
\draw[condensed_traj, red!50] (-2.5, 3.7) to[out=-10, in=190] (2.5, 3.7);
\draw[condensed_traj, blue!50] (-1.2, 3.3) -- (1.2, 3.3);
 
\draw[<->, green!60!black, thick] (3.1, 3.3) -- (3.1, 3.7);
\node[right, green!60!black, font=\scriptsize] at (3.2, 3.5) {$\delta_1 > 0$};

\begin{pgfonlayer}{background}
    \fill[contraction_cone] (-5.5, 0.3) -- (-1.5, 0.3) -- (c1_1) -- cycle;
    \fill[blue!15, opacity=0.4, draw=none] (-1.5, 0.3) -- (1.5, 0.3) -- (c1_2) -- cycle;
    \fill[contraction_cone] (1.5, 0.3) -- (5.5, 0.3) -- (c1_3) -- cycle;
\end{pgfonlayer}

\node at (0, 5.2) {\Huge $\vdots$};
\node[right, gray, font=\footnotesize, align=left] at (1, 5.2) {Iterated\\pump cycles};

\node[manifold, minimum width=3cm] (MD) at (0, 7) {};
\node[below right, gray] at (MD.north west) {$\mathcal{M}_D$ (Top Manifold)};
\fill[red!90] (-0.8, 6.7) circle (4pt);
\fill[blue!90] (0.8, 6.7) circle (4pt);
\node[red!80!black, font=\scriptsize, above] at (-0.8, 6.75) {$A$};
\node[blue!80!black, font=\scriptsize, above] at (0.8, 6.75) {$B$};
\draw[green!60!black, very thick, dashed] (0, 6.5) -- (0, 7.5);
\node[green!60!black, font=\scriptsize, below] at (0, 6.45) {$f$};
 
\draw[dim_line] (-1.5, 7.8) -- (1.5, 7.8) node[midway, above, align=center] {Bounded Diameter\\$\approx L / \rho^D = O(1)$};
 
\draw[<->, green!60!black, very thick] (1.4, 7) -- (1.8, 7);
\draw[<->, green!60!black, very thick] (-1.4, 7) -- (-1.8, 7);
\node[right, green!60!black, font=\scriptsize] at (2.0, 7) {$\delta_D \gg 0$};
 
\begin{pgfonlayer}{background}
     \fill[contraction_cone] (M1.north west) -- (M1.north east) -- (MD.south) -- cycle;
\end{pgfonlayer}
\node[map_arrow, rotate=90] at (-2, 5.5) {$q_{D-1}$};

\draw[dim_line] (-7.5, 0) -- (-7.5, 7) node[midway, fill=white, align=center, rotate=90] {Hierarchy Depth $D = O(\log L)$};

 
\node[font=\bfseries, align=center] at (9, 7.8) {Separator Pump\\[-2pt]\footnotesize (Parity Alternation)};
 
 
\node[odd_phase] (odd1) at (9, 1.0) {\textbf{Odd} ($\Hodd$)\\[-2pt]\scriptsize Expand / Explore};
\draw[pump_arrow, orange!70!black] (odd1.south) ++(0,-0.05) -- ++(0, -0.5) node[right, font=\scriptsize, orange!70!black, pos=0.5] {expose $\partial$};
 
\node[even_phase] (even1) at (9, 2.5) {\textbf{Even} ($\Heven$)\\[-2pt]\scriptsize Contract / Validate};
\draw[pump_arrow, blue!60!black] (even1.south) ++(0,-0.05) -- ++(0, -0.5) node[right, font=\scriptsize, blue!60!black, pos=0.5] {collapse $\beta_1 \!\to\! \beta_0$};
 
 
\node[draw=green!60!black, fill=green!5, rounded corners=2pt, font=\scriptsize, align=center, minimum width=2.4cm] (delta1) at (9, 3.6) {$\delta_0 \to \delta_1$\\margin increases};

\draw[gray!40, thick, dashed] (7.2, 4.3) -- (10.8, 4.3);
 
 
\node[odd_phase] (odd2) at (9, 4.9) {\textbf{Odd} ($\Hodd$)\\[-2pt]\scriptsize Expand / Explore};
\draw[pump_arrow, orange!70!black] (odd2.south) ++(0,-0.05) -- ++(0, -0.5);
 
\node[even_phase] (even2) at (9, 6.2) {\textbf{Even} ($\Heven$)\\[-2pt]\scriptsize Contract / Validate};
\draw[pump_arrow, blue!60!black] (even2.south) ++(0,-0.05) -- ++(0, -0.5) node[right, font=\scriptsize, blue!60!black, pos=0.5] {collapse};
 
 
\node[draw=green!60!black, fill=green!5, rounded corners=2pt, font=\scriptsize, align=center, minimum width=2.4cm] (delta2) at (9, 7.1) {$\delta_{D-1} \to \delta_D$\\Urysohn activates};

 
\draw[->, thick, gray!50, dashed] (odd1.west) -- (5.7, 0.5) node[midway, above, font=\scriptsize, gray, sloped] {identifies};
\draw[->, thick, gray!50, dashed] (even1.west) -- (5.7, 2.8) node[midway, above, font=\scriptsize, gray, sloped] {collapses};
 
\node[align=center, font=\footnotesize, color=red!70!black] at (-8.5, 3.5) {\textbf{Geometric}\\\textbf{Necessity:}\\Trading linear\\width for\\logarithmic\\depth.};
 
\node[draw=black!40, fill=white, rounded corners=3pt, font=\footnotesize, align=center, text width=5cm] at (9, -0.8) {\textbf{Key:} Each odd-even cycle is one\\pump stroke. Part~1 (contraction) is\\the inner operation of Part~2 (alternation).};
 
\end{tikzpicture}
}
\caption{\textbf{The Urysohn Ladder with Parity Alternation.} \textbf{Left:} A hierarchy of quotient maps $q_k$ progressively compresses the input stream from $\mathcal{M}_0$ to $\mathcal{M}_D$, trading linear diameter for logarithmic depth. At $\mathcal{M}_0$, class manifolds (red, blue) are entangled with margin $\delta_0 \approx 0$. At $\mathcal{M}_D$, classes occupy well-separated tokens with $\delta_D \gg 0$, and Urysohn's separating function $f$ (green dashed line) exists with finite Lipschitz constant. \textbf{Right:} Each level transition is driven by the \emph{separator pump}: a two-stroke parity alternation cycle. The \emph{odd phase} ($\Hodd$, orange) expands the representation to expose boundary failures; the \emph{even phase} ($\Heven$, blue) contracts validated regions into tokens via metric contraction ($\beta_1 \to \beta_0$). Each cycle monotonically increases the margin $\delta$, iteratively manufacturing the disjointness precondition that Urysohn's Lemma requires.}
\label{fig:log_scaling}
\end{figure}

\noindent\textbf{Bounded Capacity Theorem}
The Separability Theorem guarantees that a valid decision boundary exists after collapse. The practical viability additionally requires that the resulting quotient space is compact - i.e., that storage scales with intrinsic complexity, not stream duration.

\begin{definition}[Effective Capacity Demand]
The \textbf{effective capacity demand} of a representational manifold $\mathcal{M}$ at resolution $\epsilon$ is:
$C_{\mathrm{eff}}(\mathcal{M}) \;\triangleq\; N(\epsilon, \mathcal{M})$
A fixed-capacity system with hardware budget $d$ can stably represent $\mathcal{M}$ only if $C_{\mathrm{eff}}(\mathcal{M}) \le d$.
\end{definition}
 
First, we formalize the geometric source of catastrophic interference \cite{mccloskey1989catastrophic}: in the absence of abstraction, representational demand grows linearly with experience.
 
\begin{lemma}[Linear Capacity Growth on Flat Manifolds]
\label{lem:linear_growth}
If the temporal manifold $\mathcal{M}_0$ is isometric to a line segment of length $L$ (i.e., no metric contraction occurs), then for any fixed resolution $\epsilon > 0$:
    $C_{\mathrm{eff}}(\mathcal{M}_0) = \Theta\left(\frac{L}{\epsilon}\right)$.
\end{lemma}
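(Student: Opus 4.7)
The plan is to prove both matching bounds $N(\epsilon, \mathcal{M}_0) = O(L/\epsilon)$ and $N(\epsilon, \mathcal{M}_0) = \Omega(L/\epsilon)$ using purely one-dimensional arguments, exploiting the isometry $\mathcal{M}_0 \cong [0,L] \subset \mathbb{R}$. Under this isometry a closed metric ball of radius $\epsilon$ centered at $x \in [0,L]$ is simply the interval $[x-\epsilon, x+\epsilon] \cap [0,L]$, whose 1-dimensional Lebesgue measure is at most $2\epsilon$. This reduces the covering-number estimate to elementary measure-theoretic bookkeeping on $[0,L]$.

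For the upper bound, I would exhibit an explicit $\epsilon$-net. Place centers at $c_i = (2i-1)\epsilon$ for $i = 1, 2, \ldots, \lceil L/(2\epsilon)\rceil$; each ball $B(c_i, \epsilon)$ covers an interval of length exactly $2\epsilon$ (truncated at the endpoints), and consecutive balls tile $[0, L]$ without gaps. This yields $N(\epsilon, \mathcal{M}_0) \leq \lceil L/(2\epsilon)\rceil \leq L/(2\epsilon) + 1$, establishing the $O(L/\epsilon)$ direction.

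For the lower bound, I would use a standard volume-covering inequality. Suppose $\{B(x_j, \epsilon)\}_{j=1}^N$ is any $\epsilon$-cover of $\mathcal{M}_0$. By countable subadditivity of the 1-dimensional Lebesgue measure $\mu$,
\begin{equation*}
L = \mu([0,L]) = \mu\Bigl(\bigcup_{j=1}^N B(x_j, \epsilon) \cap [0,L]\Bigr) \leq \sum_{j=1}^N \mu(B(x_j,\epsilon)) \leq 2N\epsilon,
\end{equation*}
which rearranges to $N \geq L/(2\epsilon)$. Combining the two bounds gives $C_{\mathrm{eff}}(\mathcal{M}_0) = N(\epsilon, \mathcal{M}_0) \in [L/(2\epsilon),\, L/(2\epsilon) + 1]$, which is $\Theta(L/\epsilon)$ for any fixed $\epsilon > 0$.

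This argument is essentially routine, so I do not anticipate a genuine mathematical obstacle; the only subtlety worth flagging is boundary-effect bookkeeping (the two terminal balls each contribute less than $2\epsilon$ of usable measure), which perturbs the constant by at most one unit and hence is absorbed into the $\Theta$ notation. For completeness I would also note that the same argument works if $\mathcal{M}_0$ is only \emph{quasi-isometric} to $[0,L]$ with uniformly bounded distortion, since quasi-isometries change covering numbers only by a multiplicative constant, which will be useful when Lemma~\ref{lem:linear_growth} is invoked in subsequent theorems against non-strictly-linear temporal trajectories.
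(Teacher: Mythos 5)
Your proof is correct and takes essentially the same approach as the paper, which simply asserts $N \approx L/(2\epsilon)$ for a segment of length $L$; you supply the explicit two-sided argument (a tiling construction for the upper bound and a Lebesgue-measure subadditivity argument for the lower bound) that makes the paper's informal $\Theta(L/\epsilon)$ claim rigorous.
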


\noindent\textbf{The Recursive Solution: Bounded Capacity Theorem}
Because the representational demand at each level is governed by the covering number of $\mathcal{M}_k$, recursive metric contraction transforms linear growth in capacity demand into logarithmic growth in hierarchical depth \cite{simon1973architecture}.
To formalize the solution, we introduce recursive metric contraction.
 
\begin{definition}[Recursive $\rho$-Compressibility]
\label{def:rho_compressibility}
A sequence of temporal manifolds $\{\mathcal{M}_k\}_{k=0}^D$ is \emph{recursively $\rho$-compressible} if there exists a sequence of quotient maps $q_k: \mathcal{M}_k \to \mathcal{M}_{k+1}$ such that:
    $C_{\mathrm{eff}}(\mathcal{M}_{k+1}) \le \rho^{-1} C_{\mathrm{eff}}(\mathcal{M}_k)$
where $\rho > 1$ is the uniform compression factor determined by the environment's nested structure.
\end{definition}
\begin{theorem}[Bounded Capacity under Recursive Metric Contraction]
\label{thm:bounded_capacity}
Let $(\mathcal M_0, d_0)$ be a metric space representing experience up to time $L$,
and let $N(\epsilon,\mathcal M)$ denote the $\epsilon$-covering number under $d$.
Assume a hierarchical condensation process produces a sequence of quotient spaces
$\mathcal{M}_0 \xrightarrow{q_0} \mathcal{M}_1 \xrightarrow{q_1} \cdots \xrightarrow{q_{D-1}} \mathcal{M}_D$
equipped with quotient metrics $\{d_k\}_{k\ge 1}$.
Assume that there exists $\rho>1$ such that for all $k$,
$N(\epsilon,\mathcal M_{k+1}) \;\le\; \rho^{-1}\, N(\epsilon,\mathcal M_k)$.
Then
$N(\epsilon,\mathcal M_D) \;\le\; \rho^{-D} N(\epsilon,\mathcal M_0)$,
and in particular, if the representational budget satisfies $N(\epsilon,\mathcal M_D)\le d$,
it suffices to take
$D \;\ge\; \left\lceil \log_{\rho}\!\left(\frac{N(\epsilon,\mathcal M_0)}{d}\right)\right\rceil$.
Therefore, bounded representational demand can be maintained for arbitrarily long streams
by logarithmic growth in hierarchy depth.
\end{theorem}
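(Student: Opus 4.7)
The plan is to reduce the statement to a single finite induction on the depth index $k$, since the recursive compressibility hypothesis already packages all the geometric content into a clean multiplicative inequality. First I would set $a_k := N(\epsilon,\mathcal M_k)$ and restate the hypothesis as $a_{k+1} \le \rho^{-1} a_k$, so the theorem becomes the purely scalar claim $a_D \le \rho^{-D} a_0$ together with a threshold bound on $D$.

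The bulk of the argument is then a short induction. The base case $k=0$ is trivial since $a_0 \le \rho^0 a_0$. For the inductive step I would assume $a_k \le \rho^{-k} a_0$ and apply the compressibility inequality to obtain
\[
a_{k+1} \;\le\; \rho^{-1} a_k \;\le\; \rho^{-1}\cdot \rho^{-k} a_0 \;=\; \rho^{-(k+1)} a_0,
\]
which closes the induction at $k = D$. This yields the first claim $N(\epsilon,\mathcal M_D)\le \rho^{-D} N(\epsilon,\mathcal M_0)$ with no geometric work beyond what is already subsumed by \Cref{def:rho_compressibility}.

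For the depth threshold, I would impose the hardware constraint $\rho^{-D} N(\epsilon,\mathcal M_0) \le d$, take logarithms base $\rho$ (legitimate since $\rho>1$), and solve to obtain $D \ge \log_\rho\!\bigl(N(\epsilon,\mathcal M_0)/d\bigr)$; rounding up to the nearest integer gives the stated ceiling. I would then remark that the right-hand side grows only logarithmically in $N(\epsilon,\mathcal M_0)$, which by \Cref{lem:linear_growth} scales linearly in stream length $L$, so the required depth is $D = O(\log L)$, confirming the advertised trade of linear metric growth for logarithmic topological depth.

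The only real subtlety, and what I would flag as the main conceptual obstacle rather than a computational one, is justifying that the hypothesis of \Cref{def:rho_compressibility} is well-posed: namely, that the quotient metric $d_{k+1}$ on $\mathcal{M}_{k+1}$ is such that a covering of $\mathcal{M}_k$ by $\epsilon$-balls descends to a covering of $\mathcal{M}_{k+1}$ by $\epsilon$-balls, and that the contraction factor $\rho$ is genuinely uniform across levels rather than degrading as depth increases. I would address this in a brief remark immediately after the proof, noting that uniformity of $\rho$ follows from the self-similar (nested) structure assumed of the environment, and that non-uniform contraction factors $\{\rho_k\}$ would simply replace $\rho^{-D}$ with $\prod_{k=0}^{D-1}\rho_k^{-1}$ without altering the qualitative conclusion that depth, not width, is the scaling resource.
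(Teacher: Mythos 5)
Your proof is correct and follows essentially the same route as the paper's: a finite induction on the compressibility inequality to obtain $N(\epsilon,\mathcal M_D)\le \rho^{-D}N(\epsilon,\mathcal M_0)$, then a logarithm to extract the depth threshold. The closing remark on well-posedness of the quotient metric and on non-uniform contraction factors $\{\rho_k\}$ is a sensible addition but does not change the argument.
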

 
\noindent\textbf{Interpretation.}
Theorem~\ref{thm:bounded_capacity} implies that continual learning need not store or traverse an ever-growing trajectory. Instead, inference proceeds on a compact quotient manifold whose geometry encodes long-range temporal structure, operationally replacing linear-time search with short geodesic traversals across a folded manifold \cite{burago2001course}. The connection between geometric folding and classical space-time trade-offs (Savitch-style recursion vs.\ dynamic programming) is developed in the next section.
 
\section{Recursion via Hierarchical Depth}
\label{sec:recursive_depth}

\paragraph{Logarithmic Depth as a Geometric Necessity} 
Theorem \ref{thm:bounded_capacity} establishes that unbounded experience can be represented without increasing the dimensionality of the underlying state space.
It raises a fundamental conservation question: if representational width remains fixed, \emph{where does accumulated complexity reside?}
From a geometric perspective, the answer is neither in width nor in volume, but in \textbf{hierarchical depth} \cite{telgarsky2016benefits,poggio2017theory}.
Next, we show that recursive metric contraction necessarily induces a logarithmic hierarchy of quotient manifolds, which relates geometric depth to space-time trade-offs.
Under the quotient hierarchy $\mathcal{M}_0 \xrightarrow{q_0} \cdots \xrightarrow{q_D} \mathcal{M}_D$ established (Fig. \ref{fig:log_scaling}), each contraction reduces the effective diameter by a factor $\rho > 1$. After $D$ levels, the effective diameter satisfies
$\operatorname{diam}(\mathcal{M}_D) \approx \frac{L}{\rho^D}$.
To maintain a bounded covering number, and hence bounded capacity demand, the hierarchy depth must scale as
$D = O(\log L)$.
The logarithmic growth is a geometric necessity \cite{savitch1970relationships}: any process that maps an unbounded trajectory into a compact representation via repeated contraction must trade linear diameter growth for logarithmic depth growth. Hierarchical depth is the \emph{geometric price} paid for constant-width inference (i.e., increasing depth does not increase instantaneous capacity demand), since each level operates on a compact quotient with bounded covering number.

\paragraph{Geometric Relation to Savitch-Style Recursion}
 
The depth-width trade-off admits a direct geometric interpretation of classical results in computational complexity, most notably \textbf{Savitch’s Theorem} \cite{savitch1970relationships}.
Savitch’s construction shows that nondeterministic space-bounded reachability can be simulated deterministically by recursively subdividing paths, trading space for recursion depth.
In geometric terms, Savitch-style recursion corresponds to inference on a \emph{flat} manifold:
long trajectories are verified by repeatedly bisecting geodesics, without altering the underlying geometry.
Although space usage remains bounded, inference repeatedly revisits the same regions of the manifold, incurring exponential time overhead.
Recursive condensation implements a geometric analog of memoization.
When a sub-trajectory is validated and collapsed via a quotient map, its internal geometry is removed from future consideration.

\begin{lemma}[Compatibility $\Rightarrow$ Local Boundedness]
\label{lem:compat_local_bound}
Let $(X,d)$ be a metric space and let $q:X\to Y:=X/{\sim}$ be a quotient map with equivalence classes
$[x]=q^{-1}(q(x))$. Assume:
1) \textbf{Bounded class diameter (metric collapse):} there exists $\varepsilon>0$ such that
$\mathrm{diam}([x]) := \sup_{u,v\in[x]} d(u,v) \le \varepsilon
\quad \forall x\in X$;
2) \textbf{Compatible separator:} there exists a continuous separator $f:X\to[0,1]$ that is constant on equivalence
classes, i.e.,
$u\sim v \ \Rightarrow\ f(u)=f(v)$;
3) \textbf{Quantitative regularity (Lipschitz):} $f$ is $L$-Lipschitz on $(X,d)$:
$|f(u)-f(v)| \le L\,d(u,v)\quad \forall u,v\in X$.
Then there exists a unique continuous function $\bar f:Y\to[0,1]$ such that $f=\bar f\circ q$.
Moreover, for any $x\in X$ and any $u\in[x]$,
$|f(u)-f(x)| \le L\,\varepsilon$,
so the variation of the score inside each quotient cell is uniformly bounded.
\end{lemma}

Subsequent inference does not re-traverse the original geodesic; it traverses a \emph{wormhole}, a metric shortcut induced by contraction, which transforms repeated recursive verification into direct navigation on a folded manifold.
From a wormhole perspective, we observe:
1) \textbf{Flat inference (search)} corresponds to geodesic traversal on $\mathcal{M}_0$, analogous to Savitch-style recursion without memoization;
2) \textbf{Condensation} corresponds to collapsing verified geodesic segments into singular points, eliminating redundant distance;
3) \textbf{Hierarchical inference} corresponds to navigation on $\mathcal{M}_D$, where long temporal separations have been converted into short paths through quotient geometry.
The resulting structure is a \textbf{tower of quotient manifolds}, which we call Urysohn Ladder (Fig. \ref{fig:log_scaling}), in which complexity is serialized into depth rather than width.
Just as Savitch’s theorem manages space by serializing verification, Urysohn Ladder manages capacity by recursive condensation.
 
\begin{theorem}[Recursive separation preserved by quotienting]
\label{thm:urysohn_quotient}
Let $X_0$ be normal and $A_0,B_0\subset X_0$ disjoint closed as in Theorem \ref{thm:topological_collapse}.
Let $q_k:X_k\to X_{k+1}$ be quotient maps and define $A_{k+1}:=q_k(A_k)$, $B_{k+1}:=q_k(B_k)$.
If there is a continuous $f_0:X_0\to[0,1]$ separating $A_0,B_0$ such that, for each $k$,
$x\sim_k x' \implies f_k(x)=f_k(x'),
\quad\text{where } f_k \text{ is the descended separator on } X_k$,
then for every level $k$ there is a continuous $f_k:X_k\to[0,1]$ separating $A_k$ and $B_k$.
\end{theorem}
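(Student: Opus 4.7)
The plan is to proceed by induction on the hierarchy level $k$, with Theorem \ref{thm:topological_collapse} supplying the base case and the universal property of quotient maps driving the inductive step. At level $k = 0$, normality of $X_0$ together with the disjoint closedness of $A_0, B_0$ already furnishes a continuous Urysohn separator $f_0 : X_0 \to [0,1]$ with $f_0(A_0) = \{0\}$ and $f_0(B_0) = \{1\}$; no additional work is needed to anchor the induction.

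For the inductive step I would assume a continuous $f_k : X_k \to [0,1]$ exists with $f_k(A_k) = \{0\}$, $f_k(B_k) = \{1\}$, and the compatibility hypothesis $x \sim_k x' \Rightarrow f_k(x) = f_k(x')$. Since $f_k$ is constant on the fibers of $q_k$, the universal property of the quotient topology produces a unique continuous factorization $f_{k+1} : X_{k+1} \to [0,1]$ satisfying $f_k = f_{k+1} \circ q_k$; its image remains in $[0,1]$ because $q_k$ is surjective and $f_k$ already takes values there. To verify separation at the new level I would pick any $a' \in A_{k+1} = q_k(A_k)$, choose a preimage $a \in A_k$ with $q_k(a) = a'$, and compute $f_{k+1}(a') = f_{k+1}(q_k(a)) = f_k(a) = 0$; the symmetric argument on $B_{k+1}$ gives $f_{k+1}(B_{k+1}) = \{1\}$. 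This produces the required separator $f_{k+1}$ and closes the induction.

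The main obstacle is conceptual rather than technical: a generic quotient map can identify a point of $A_k$ with a point of $B_k$, causing $A_{k+1}$ and $B_{k+1}$ to collide and making any candidate separator ill-defined. The compatibility hypothesis is precisely what rules out this pathology. Indeed, if some $a \in A_k$ and $b \in B_k$ satisfied $a \sim_k b$, the hypothesis would force $0 = f_k(a) = f_k(b) = 1$, a contradiction; hence $A_{k+1}$ and $B_{k+1}$ are automatically disjoint and the descended map is automatically well defined. The only subtlety worth flagging is that normality of $X_{k+1}$ need not be invoked at any intermediate step, because the induction transports the Urysohn function rather than regenerating it; all higher-level separators are simply pushforwards of $f_0$ along the cumulative quotient $q_{k-1} \circ \cdots \circ q_0$, and the theorem can equivalently be phrased as saying that this pushforward exists continuously whenever the equivalence relations refine the level sets of $f_0$.
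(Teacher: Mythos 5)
Your proof is correct and follows essentially the same inductive strategy as the paper's: anchor at $k=0$ with Urysohn's Lemma, then use the compatibility hypothesis together with the universal property of the quotient topology to descend $f_k$ to a well-defined continuous $f_{k+1}$, checking separation on $A_{k+1}, B_{k+1}$ by lifting to preimages. Your closing remarks — that compatibility automatically forces $A_{k+1}\cap B_{k+1}=\emptyset$, that normality is never re-invoked beyond level $0$, and that every $f_k$ is just the pushforward of $f_0$ along the cumulative quotient — are all accurate and match (or mildly sharpen) the observations the paper itself makes.
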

 

\section{Empirical Validation}
\label{sec:empirical}
 
We validate each core theorem on real-world data and pretrained models. For each experiment, we state the theoretical prediction, describe the protocol, and report quantitative results. Additional experimental results on the scalability of parity alternation in Urysohn Ladder for continual learning (Theorem \ref{thm:urysohn_quotient}) and discussion about limitations of this work can be found in the Appendix. 

\paragraph{Experiment 1: Topological Collapse Separability (Theorem~\ref{thm:topological_collapse})}
Stronger metric collapse (lower within-class / between-class distance ratio) should monotonically improve linear separability without any increase in embedding dimension.
We train a small embedding network (3 conv layers, $d_{\mathrm{embed}}{=}64$) with supervised contrastive loss \cite{khosla2020supervised} on MNIST and CIFAR-10, sweeping temperature $\tau \in \{2.0, 1.0, 0.5, 0.25, 0.1, 0.05\}$. Lower $\tau$ produces harder within-class contraction. At each setting, we measure: (i) linear-probe accuracy (logistic regression on learned embeddings), and (ii) the collapse ratio (mean within-class distance / mean between-class centroid distance).
As shown in Fig.~\ref{fig:exp1}, linear-probe accuracy improves monotonically with collapse strength on both datasets, reaching $>99\%$ on MNIST and $>75\%$ on CIFAR-10 at $\tau{=}0.05$. The collapse ratio decreases in tandem, confirming that within-class distances contract faster than between-class distances separate, which validates Theorem~\ref{thm:topological_collapse}: metric contraction alone enables linear separability - the kernel trick is unnecessary when the quotient map has sufficient strength.
 
\begin{figure}[h]
    \centering
    \includegraphics[width=\linewidth]{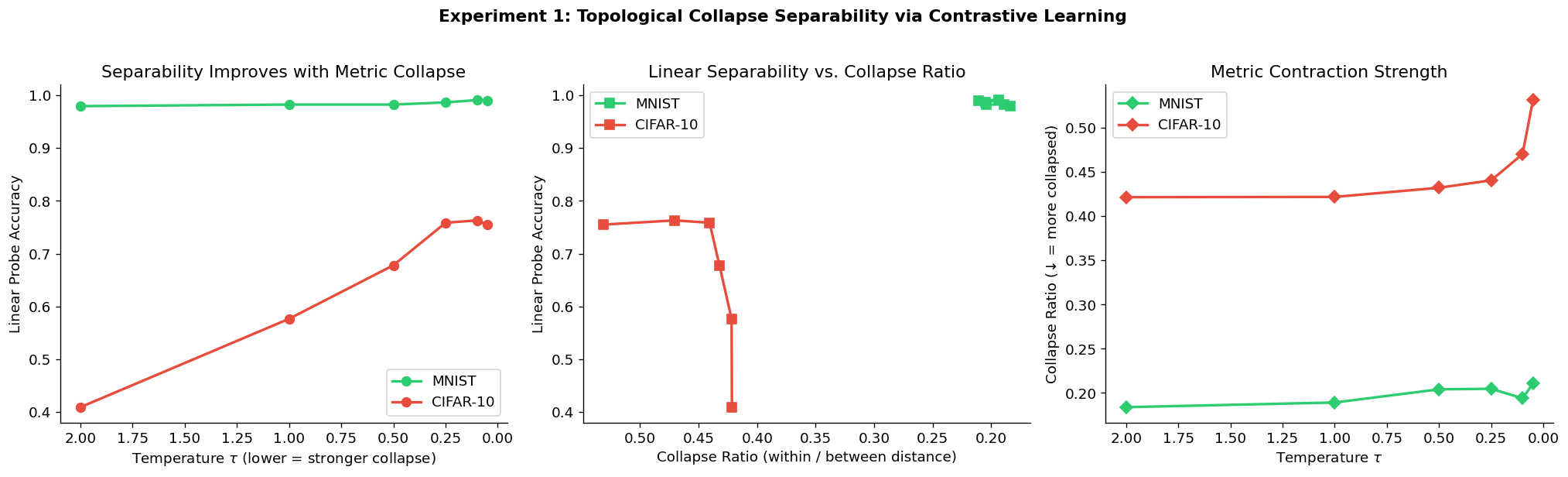}
    \caption{Supervised contrastive learning as a learned quotient map \textbf{(Experiment 1)}. Linear-probe accuracy (left) and collapse strength (right) improve monotonically with $\tau \downarrow$) on both MNIST and CIFAR-10, verifying that metric contraction enables linear separability (middle, Theorem~\ref{thm:topological_collapse}).}
    \label{fig:exp1}
\end{figure}
 
\paragraph{Experiment 2: Parity-Partitioned Stability (Theorem~\ref{thm:parity_stability})}
 
Decomposing parameters into orthogonal flow/scaffold subspaces with alternating updates should eliminate cross-task interference entirely.
We implement parity partitioning in a 4-layer MLP on Permuted-MNIST (5 sequential tasks). Even-indexed layers (0, 2) form the scaffold $\theta_S$, frozen after initial training on Task~0. Odd-indexed layers (1, 3) plus the classification head form the flow $\theta_F$, re-initialized per task. Each task's flow parameters are stored as a per-task snapshot (topological isolation via disjoint fibers). We compare against fine-tuning (no protection) and EWC ($\lambda{=}500$) \cite{kirkpatrick2017overcoming}.
As shown in Fig.~\ref{fig:exp2}, fine-tuning exhibits catastrophic forgetting: Task~0 accuracy drops to ${\sim}20\%$ (chance) after learning subsequent tasks. EWC mitigates the forgetting partially, but average accuracy still degrades to ${\sim}70\%$ after 5 tasks. Parity partitioning achieves zero forgetting by construction - the accuracy matrix confirms that evaluating any prior task with its stored flow snapshot recovers the original performance exactly (diagonal entries $>95\%$). The trade-off is $O(T)$ memory for flow snapshots, with efficiency governed by the scaffold-to-flow parameter ratio.

\begin{figure}[h]
    \centering
    \includegraphics[width=\linewidth]{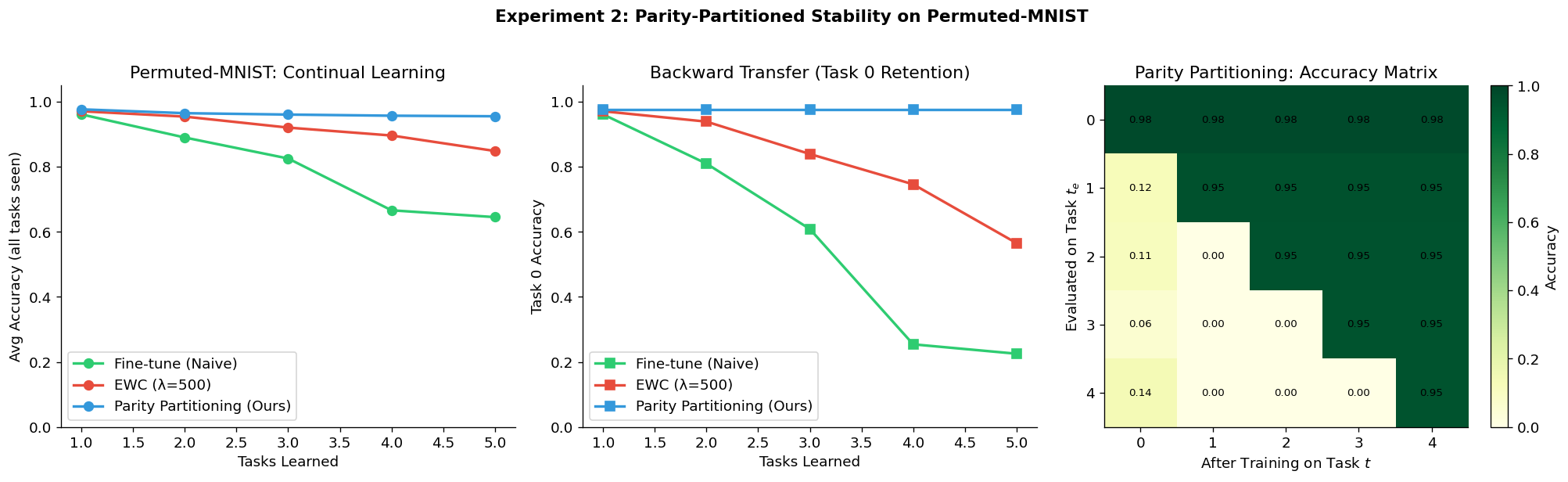}
    \caption{Continual learning on Permuted-MNIST \textbf{(Experiment 2)}. Left- Avg. Accuracy Performance; middle- Task 0 retention rate; right - Accuracy matrix. Fine-tuning (green) suffers catastrophic forgetting; EWC (red) partially mitigates it; parity partitioning (blue) achieves zero forgetting by isolating per-task flow parameters while sharing a frozen scaffold (Theorem~\ref{thm:parity_stability}).}
    \label{fig:exp2}
\end{figure}

\paragraph{Experiment 3: Bounded Capacity (Theorem~\ref{thm:bounded_capacity})}
 
On a flat manifold, the $\epsilon$-covering number grows as $N(\epsilon) \propto L$. Under recursive $\rho$-contraction, $N(\epsilon, \mathcal{M}_D) \leq \rho^{-D} N(\epsilon, \mathcal{M}_0)$, yielding bounded capacity at logarithmic depth.
We extract token-level hidden states ($d{=}768$) from GPT-2 over progressively longer contexts ($L \in \{64, 128, 256, 512, 768, 1024\}$) and compute empirical covering numbers via greedy $\epsilon$-nets ($\epsilon{=}6.0$). We then apply a BPE-like hierarchical compression proxy that mimics quotient maps: at each level, adjacent hidden states with minimal distance are merged (averaged), reducing sequence length by factor $\rho \approx 2.0$ per level, up to 8 levels.
Fig.~\ref{fig:exp3} shows the result of our scaling study. Flat covering numbers grow approximately linearly with context length (slope $\approx 0.35$), confirming Lemma~\ref{lem:linear_growth}. After hierarchical compression, covering numbers at each level decay exponentially, closely tracking the theoretical bound $N_0 / \rho^k$. The top-level covering number remains approximately constant across all input lengths, directly validating Theorem~\ref{thm:bounded_capacity}.

\begin{figure}[h]
    \centering
    \includegraphics[width=\linewidth]{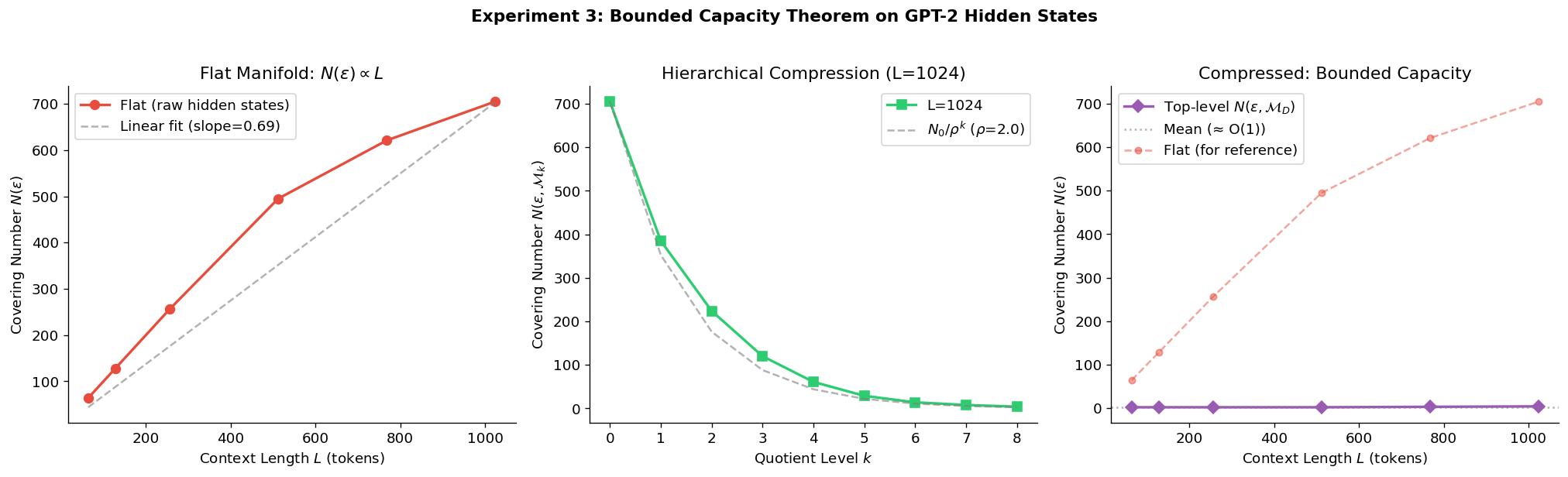}
    \caption{Covering number scaling on GPT-2 hidden states \textbf{(Experiment 3)}. Flat-manifold covering numbers grow linearly with context length $L$ (red), while BPE-like hierarchical compression (green) yields bounded top-level covering (purple) regardless of $L$, confirming bounded capacity of Urysohn Ladder as predicted by Theorem~\ref{thm:bounded_capacity}.}
    
    \label{fig:exp3}
\end{figure}

\vspace{-0.2in} 
\section{Conclusion}
\vspace{-0.1in}
We introduced a geometric account of continual learning that treats abstraction as a \emph{physical deformation} of representational space. The core obstacle to lifelong learning is the \emph{flat manifold problem}: when experience is encoded as an expanding trajectory in a rigid metric, geodesic distances and covering numbers grow with time, forcing overlap and catastrophic interference. Our framework resolves the plasticity-stability paradox by \emph{recursive metric contraction} implemented by the Urysohn Ladder as a hierarchy of quotient maps that folds the temporal manifold into a compact scaffold. In our view, learned tokens are not merely compressed features, but \emph{geodesic shortcuts}, regions of extreme metric contraction that make distant events adjacent in the induced quotient geometry. Empirical validation across four experiments on real-world data and pretrained models confirms the theoretical predictions: bounded capacity under hierarchical compression, linear separability through metric collapse, interference-free continual learning via parity partitioning, and progressive topological disentanglement through network depth. 
 
\section*{Broader Impact Statement}
 
This work proposes a geometric framework for continual learning in which abstraction is realized as recursive metric contraction (quotienting) that converts repeated ambient-space search into low-action navigation on an intrinsic scaffold. As demonstrated by our empirical results, the primary impact is to shift how the community thinks about scalability in lifelong learning: from increasing parameter counts or replay budgets to actively \emph{reshaping} representational geometry so that capacity, interference, and inference cost are governed by intrinsic structure rather than ambient dimension. The framework also suggests a pathway toward \emph{energy-efficient} ML by separating ``write'' costs (rare structural commitments) from ``read/routing'' costs (cheap traversal), which could reduce both compute expenditure and carbon footprint for deployment on edge and neuromorphic hardware.
 
Potential negative impacts are limited but include the risk of over-interpreting topological metaphors as biological claims without sufficient neurophysiological evidence, and the possibility that aggressively compressed/tokenized representations could amplify dataset biases if the contracted scaffold preserves spurious correlations as stable ``shortcuts.'' We encourage careful benchmarking on diverse continual-learning settings, explicit bias audits across tasks and subpopulations, and transparent reporting of failure modes (e.g., when quotienting harms rare concepts). Overall, we expect the net impact to be positive by providing a principled lens for designing scalable, stable, and potentially lower-energy learning systems, while highlighting concrete assumptions that can be falsified experimentally.

\bibliography{bibfile,references}

\newpage
\appendix

\renewcommand{\theequation}{A\arabic{equation}}
\renewcommand{\thefigure}{A\arabic{figure}}
\renewcommand{\thetable}{A\arabic{table}}
\setcounter{equation}{0}    
\setcounter{figure}{0}    
\setcounter{table}{0}    

\section{Proofs of Four Theorems}
\label{appendix:A}

\begin{proof}[Proof of Theorem \ref{thm:topological_collapse}]
Existence of $f$ follows from Urysohn's lemma.
By construction, $f$ is constant on $\sim_f$-equivalence classes, hence $\bar f([x]):=f(x)$ is well-defined.
The quotient universal property implies $\bar f$ is continuous and satisfies $f=\bar f\circ q$.
Since $f\equiv 0$ on $A$ and $f\equiv 1$ on $B$, each set is contained in a single equivalence class, so $q(A)$ and $q(B)$ are singleton points with $\bar f$-values $0$ and $1$.
\end{proof}

\begin{proof}[Proof of Theorem \ref{thm:parity_stability}]
\textbf{Step 1: Homological Orthogonality.}
The Parity Principle posits that $\mathcal{M}_{odd}$ represents the space of 1-cycles (active flows) and $\mathcal{M}_{even}$ represents the space of 0-cycles (connected components/points). In algebraic topology, these distinct homology groups ($H_1$ and $H_0$) are generators of orthogonal subspaces in the total phase space. No continuous deformation within the connected component of a cycle (in $H_1$) alters the position of disconnected components (in $H_0$), provided the boundary condition $\partial \gamma = 0$ is met.
 
\textbf{Step 2: The Parity-Inverting Map.}
Let the learning operator be $\Psi(\gamma_t) = p_{new}$, where $\gamma_t \subset \mathcal{M}_{odd}$ and $p_{new} \in \mathcal{M}_{even}$.
The Jacobian of this transformation describes how changes in the input affect the memory.
Since $\mathcal{M}_{odd}$ and $\mathcal{M}_{even}$ are disjoint during the active inference phase (separated by the wake/sleep or search/navigation phase transition), the cross-term of the metric tensor is zero.
 
\textbf{Step 3: Invariance.}
Let $S_{old} \subset \mathcal{M}_{even}$ be the set of previously learned concepts.
The collapse of a new trajectory $\gamma_{new}$ involves a contraction of the metric $g_{odd}$ in the flow space.
Since $S_{old}$ resides entirely in the metric space $g_{even}$, and the coupling is unidirectional ($\Psi: odd \to even$), the contraction of $g_{odd}$ does not induce stress or displacement on the geodesic distances in $g_{even}$.
 
\textbf{Step 4: Conclusion.}
The capacity of $\mathcal{M}_{even}$ to store linearly separable points is limited only by the volume of $\mathcal{M}_{even}$ (Theorem 1) and is \textit{independent} of the dynamic complexity required to generate those points in $\mathcal{M}_{odd}$. Therefore, the system effectively avoids catastrophic interference.
\end{proof}

\begin{proof}[Proof of Theorem \ref{thm:bounded_capacity}]
By the \textbf{Compressibility} assumption, for every level $k\ge 0$ we have
\begin{equation}
\label{eq:compress_step}
N(\epsilon,\mathcal M_{k+1}) \;\le\; \rho^{-1}\, N(\epsilon,\mathcal M_k).
\end{equation}
Applying \eqref{eq:compress_step} repeatedly yields a telescoping bound. Concretely,
$N(\epsilon,\mathcal M_{1}) \le \rho^{-1} N(\epsilon,\mathcal M_0)$,
$N(\epsilon,\mathcal M_{2}) \le \rho^{-1} N(\epsilon,\mathcal M_1)\le \rho^{-2} N(\epsilon,\mathcal M_0)$,
and by induction, assume $N(\epsilon,\mathcal M_{k}) \le \rho^{-k} N(\epsilon,\mathcal M_0)$. Then
\begin{equation}
\begin{aligned}
N(\epsilon,\mathcal M_{k+1})
&\le \rho^{-1} N(\epsilon,\mathcal M_k) \\
&\le \rho^{-1}\cdot \rho^{-k} N(\epsilon,\mathcal M_0)
= \rho^{-(k+1)} N(\epsilon,\mathcal M_0).
\end{aligned}
\end{equation}
Therefore, for all $D\ge 0$,
$N(\epsilon,\mathcal M_D) \;\le\; \rho^{-D} N(\epsilon,\mathcal M_0)$,
which proves the first claim.
 
For the budgeted statement, suppose we require $N(\epsilon,\mathcal M_D)\le d$ for some representational budget $d\in\mathbb N$.
A sufficient condition is
$\rho^{-D} N(\epsilon,\mathcal M_0) \;\le\; d$.
Rearranging gives
$\rho^D \;\ge\; \frac{N(\epsilon,\mathcal M_0)}{d}$.
Taking $\log_\rho(\cdot)$ on both sides (noting $\rho>1$) yields
$D \;\ge\; \log_{\rho}\!\left(\frac{N(\epsilon,\mathcal M_0)}{d}\right)$.
Since $D$ must be an integer depth, it suffices to choose
$D \;\ge\; \left\lceil \log_{\rho}\!\left(\frac{N(\epsilon,\mathcal M_0)}{d}\right)\right\rceil$,
as claimed. Hence, when $N(\epsilon,\mathcal M_0)$ grows with stream length (e.g., $L\to\infty$),
the required depth grows only logarithmically in the initial covering number, establishing that bounded representational demand can be maintained by increasing hierarchy depth.
\end{proof}
 
\begin{proof}[Proof of Theorem \ref{thm:urysohn_quotient}]
We proceed by induction on the hierarchy level $k$.
\textbf{Base Case ($k=0$):}
Since $X_0$ is normal and $A_0,B_0\subset X_0$ are disjoint closed sets, Urysohn's Lemma yields a continuous function
$f_0:X_0\to[0,1]$ such that $f_0(A_0)=\{0\}$ and $f_0(B_0)=\{1\}$.
 
\textbf{Inductive Step:}
Assume there exists a continuous $f_k:X_k\to[0,1]$ with $f_k(A_k)=\{0\}$ and $f_k(B_k)=\{1\}$.
Let $q_k:X_k\to X_{k+1}=X_k/{\sim_k}$ be the quotient map, and assume compatibility:
$x\sim_k x' \implies f_k(x)=f_k(x')$.
Then $f_k$ is constant on equivalence classes, so the function
\[
f_{k+1}:X_{k+1}\to[0,1],\qquad f_{k+1}(q_k(x)):=f_k(x)
\]
is well-defined. Moreover, by the universal property of quotient maps, $f_{k+1}$ is continuous and satisfies
$f_{k+1}\circ q_k=f_k$.
 
\textbf{Separation at level $k+1$:}
For $y\in A_{k+1}=q_k(A_k)$, pick $x\in A_k$ with $q_k(x)=y$. Then
$f_{k+1}(y)=f_k(x)=0$.
Similarly, $f_{k+1}(y)=1$ for all $y\in B_{k+1}=q_k(B_k)$.
In particular, compatibility prevents $q_k$ from identifying any $a\in A_k$ with $b\in B_k$ (otherwise $0=f_k(a)=f_k(b)=1$), so $A_{k+1}$ and $B_{k+1}$ remain disjoint.
Therefore, we have $f_{k+1}$ separates $A_{k+1}$ and $B_{k+1}$. By induction, the claim holds for all $k$.
\end{proof}

\section{Related Work}
\label{sec:related}
 
\noindent\textbf{Capacity limits in fixed-dimensional representations.}
Classical statistical learning theory establishes sharp limits on the capacity of fixed-dimensional hypothesis spaces.
Cover’s seminal result shows that the probability of linear separability of random patterns in $\mathbb{R}^d$ collapses once the number of points exceeds $O(d)$ \cite{cover1965geometry}.
This phenomenon is formalized more generally through the Vapnik-Chervonenkis (VC) dimension and related notions of covering numbers, which bound the number of distinguishable regions that can be stably represented in a given space \cite{vapnik1998statistical,shalev2014understanding}.
From a geometric perspective, these results imply that maintaining separability over an ever-lengthening trajectory in a flat manifold inevitably exhausts capacity, independent of the learning algorithm.
 
\noindent\textbf{Catastrophic interference as geometric overload.}
The breakdown of performance under sequential learning has long been recognized as catastrophic interference or catastrophic forgetting \cite{mccloskey1989catastrophic,french1999catastrophic}.
In connectionist models, gradient-based updates applied to a shared representational space cause new trajectories to overwrite previously learned structure unless explicit safeguards are imposed \cite{robins1995catastrophic}.
Contemporary continual learning methods address such failure mode through rehearsal buffers \cite{schaul2015prioritized}, parameter regularization schemes such as elastic weight consolidation \cite{kirkpatrick2017overcoming}, or architectural growth.
While effective empirically, these approaches implicitly assume a flat underlying geometry and treat interference as a resource-allocation problem rather than as a consequence of unbounded metric expansion.
 
\noindent\textbf{Hierarchical abstraction and depth as a control mechanism.}
A complementary body of work emphasizes hierarchical organization as a means of controlling complexity.
Simon argued that adaptive systems exploit hierarchical decomposition to remain tractable in complex environments \cite{simon1973architecture}.
In reinforcement learning, hierarchical methods such as feudal RL \cite{dayan1993feudal} and MAXQ \cite{dietterich2000maxq} reduce planning and learning complexity by introducing temporally extended abstractions.
From a representational standpoint, theoretical analyses have shown that depth enables efficient reuse of intermediate structure, allowing expressive power to grow exponentially with only linear increases in parameters \cite{telgarsky2016benefits}.
These results suggest that abstraction depth, rather than representational width, plays a central role in scalability.

\noindent\textbf{Geometric and topological perspectives on representation learning.}
Recent work has increasingly framed learning and abstraction through the lens of geometry.
The manifold hypothesis posits that high-dimensional data concentrate near low-dimensional structures, motivating geometric methods for representation learning and motivating tests of whether data admit manifold structure at all \cite{tenenbaum2000global,fefferman2016testing}.
Spectral, diffusion, and covering-based approaches further connect the geometry of data manifolds to compact representations through eigenstructure, Laplacian operators, and multiscale decompositions \cite{chung1997spectral,mahadevan2005proto,belkin2008towards}.
A related line of work studies \emph{grokking}, the phenomenon in which models generalize only long after they have already fit the training set, and often interprets this delayed transition as a late reorganization of the learned representation rather than as simple interpolation \cite{power2022grokking,liu2022understanding,kumar2023grokking}.
From this viewpoint, grokking suggests that successful learning may require not only fitting within an already available geometric chart, but also discovering a more appropriate global organization of the task manifold itself.
However, most existing geometric methods still treat the underlying manifold as fixed and focus on embedding, smoothing, or parametrizing it, rather than actively deforming its metric or topology to create a representation in which the task becomes structurally simpler.
 
\paragraph{Relation to Graduated Non-Convexity (GNC)}
The dynamic process of metric contraction can be conceptually interpreted as the asymptotic limit of Graduated Non-Convexity (GNC) \cite{blake1987visual}. In standard GNC, a non-convex energy functional $E$ is approximated by a sequence of functionals $E^{(p)}$ ranging from a convex approximation ($p=1$) to the true non-convex energy ($p=0$).
In our recursive quotienting network (RQN), the Parity Alternation Principle naturally implements a GNC schedule:
1) During the \textit{Search Phase} (Expansion), the system operates in a high-entropy regime effectively smoothing the energy landscape, allowing the state trajectory to escape local minima (hallucinations) and sense the global homological structure;
2) During the \textit{Collapse Phase} (Condensation), the system lowers the temperature, sharpening the energy wells.
Unlike classical GNC, which seeks a coordinate solution $x^* \in \mathbb{R}^n$, the RQN metric collapse seeks a topological quotient. As the stability parameter $\sigma \to 0$, the metric tensor $g_{ij}$ becomes singular along the concept manifold, effectively identifying the entire basin of attraction as a single point in the quotient space $\mathcal{M}/\sim$. Thus, Metric Collapse is the \textit{topological endpoint} of the GNC trajectory.
 
\noindent\textbf{Connection to Biological Implementations.}
The form of metric contraction formalized above is not merely a mathematical abstraction, but is supported by well-established biological learning mechanisms \cite{muller1996hippocampus,buzsaki2013memory}.
In cortical and hippocampal circuits, repeated co-activation of neural populations induces synaptic strengthening that effectively shortens functional distances between states, a phenomenon traditionally described in terms of attractor formation, representational compression, or chunking.
From a geometric perspective, these processes implement a local contraction of the representational metric: states that reliably co-occur or predict one another become separated by progressively shorter geodesic distances in neural state space.
Metric contraction operates across multiple spatial and temporal scales.
At the synaptic and population level, \emph{generalized Hebbian learning} (GHL), including Oja- and Sanger-type rules \cite{oja1982simplified,sanger1989optimal}, aligns neural representations with the dominant low-dimensional structure of experienced trajectories, effectively flattening and unrolling curved manifolds in activity space.
Such learning rules can be interpreted as adapting the local metric tensor to emphasize directions of consistent variance while collapsing redundant degrees of freedom.
At a higher organizational level, hippocampal replay during offline states (e.g., sharp-wave ripples \cite{wilson1994reactivation}) repeatedly reactivates extended temporal sequences in compressed form, accelerating traversal through learned state sequences and reinforcing long-range associations.
Geometrically, these biological mechanisms collectively implement a quotient-map abstraction.
Extended temporal trajectories are progressively collapsed into compact neural assemblies whose internal transitions are traversed rapidly and with minimal metabolic cost \cite{hasson2015hierarchical}.
 
\noindent\textbf{Positioning of the present work.}
The present work isolates a distinct mechanism for AGI scalability: \emph{recursive metric contraction}.
Rather than expanding representational dimension (Kernel methods), replaying past data (Generative Replay), or learning a static embedding (Pre-training), we formalize abstraction as a sequence of metric-topology factorization (MTF) and topological quotient operations that collapse validated sub-manifolds of experience (Fig. \ref{fig:log_scaling}).
We substantiate the present framework with four guarantees:
\begin{itemize}
    \item \textbf{Separability:} Our \emph{Topological Collapse Separability Theorem} (Theorem~\ref{thm:topological_collapse}) positions metric quotienting as the geometric dual to the Kernel Trick: non-linear problems can be solved not by adding dimensions, but by collapsing the metric until the problem becomes topologically trivial. Our \emph{Recursive Separation Theorem} (Theorem~\ref{thm:urysohn_quotient}) further establishes that this separability propagates faithfully through the entire quotient hierarchy, guaranteeing that disjoint concepts remain topologically separable regardless of abstraction depth.
    \item \textbf{Bounded Capacity:} Our bounded capacity theorem (Theorem~\ref{thm:bounded_capacity}) guarantees that the effective capacity demand remains uniformly bounded by $O(1)$, independent of stream length.
    \item \textbf{Scalability:} Our \emph{Hierarchical Scaling Law} (Theorem~\ref{thm:urysohn_quotient}) proves that total planning/inference cost scales with quotient geodesic distance rather than ambient domain size, yielding \emph{bounded peak load} plus \emph{quotient-distance scaling} of end-to-end effort.
    \item \textbf{Stability:} Our \emph{Parity-Partitioned Stability Theorem} (Theorem~\ref{thm:parity_stability}) shows that by segregating the manifold into orthogonal flow ($\mathcal{M}_{odd}$) and scaffold ($\mathcal{M}_{even}$) subspaces, the system supports infinite plasticity without catastrophic interference.
\end{itemize}
\noindent
Collectively, these results reframe the central problem of continual learning: it is not an unavoidable consequence of finite memory resources, but a failure to dynamically transform the topology of the representational manifold.

\paragraph{Limitations of the present work.}
We identify several limitations of the current work that suggest directions for future research.

\textit{1) Quotient construction depends on metric quality.}
The Urysohn Ladder's contraction guarantees (Theorems 1-2) assume that the base metric space captures semantically meaningful distances: metrically close points should be functionally equivalent, and metrically distant points should be functionally distinct.
In practice, this assumption is offloaded to the pretrained feature extractor, which may not satisfy it uniformly across all downstream domains.
When the pretrained features are poorly calibrated for a particular data distribution, for example, fine-grained distinctions that the pretraining objective did not incentivize, the quotient maps may collapse semantically distinct inputs into the same token (false merges) or fail to collapse equivalent inputs (redundant tokens), degrading both the separability and capacity guarantees.
Characterizing the relationship between pretraining distribution, downstream task structure, and quotient fidelity remains an open theoretical question.

\textit{2) Validation oracle for neighborhood collapse.}
The recursive contraction procedure of MTF requires a validation signal to determine when a metric neighborhood has converged sufficiently to be collapsed into a token.
In our experiments, this signal is derived from classification accuracy on held-out data within each neighborhood.
This introduces a dependency on labeled validation data at each quotient level, which may not be available in fully online or unsupervised continual learning settings.
Replacing the supervised validation oracle with self-supervised or uncertainty-based collapse criteria is a natural extension but is not addressed in the current work.

\textit{3) Scaffold amortization at scale.}
While we demonstrate scaffold amortization as a proof of concept, the current implementation operates on MNIST-scale benchmarks where the quotient hierarchy has modest depth.
Whether the amortized scaffold remains efficient when the number of quotient levels grows, as would be required for complex, high-dimensional task distributions, depends on the rate at which covering numbers decay across levels, which we bound theoretically ($O(1)$ per level) but have not yet verified empirically at scale with large pretrained models.
The engineering challenge of maintaining and querying a deep quotient hierarchy in the latent space of a billion-parameter transformer remains unexplored.

\section{Scalability of Parity-MTF: Scaffold Amortization in Continual Learning}
\label{sec:scalability_parity_mtf}

The preceding sections establish that parity partitioning (Theorem~\ref{thm:parity_stability}) provides exact interference-free learning through MTF (orthogonal scaffold-flow decomposition), and that recursive quotienting (Theorem~\ref{thm:bounded_capacity}) bounds representational demand independently of stream length.
A natural question arises: \emph{how do these guarantees compose as the number of tasks grows large?}
In this section, we investigate the scalability of \textbf{Parity-MTF}, a method that combines the structural decoupling of (Metric-Topological Factorization MTF) with the scaffold-flow decomposition prescribed by the Parity Alternation Principle.

\subsection{Method: Parity-MTF}

\paragraph{Architecture.}
Let $P$ denote the total parameter count of a base classifier.
In standard MTF, each detected context receives a full expert (a fresh copy of the base classifier), yielding a total storage cost of $T \cdot P$ for $T$ contexts.
Parity-MTF decomposes the base classifier into two parameter-disjoint subsets of equal total capacity:
\begin{itemize}
    \item \textbf{Scaffold} $\theta_S$ (even-parity layers): Middle layers encoding abstract, permutation-invariant features.
    These are trained on the first task and frozen permanently. Parameter count: $P_S$.
    \item \textbf{Flow} $\theta_F$ (odd-parity layers + head): Input-facing and output-facing layers that must adapt to each task's specific input distribution.
    These are re-initialized and trained independently per task. Parameter count: $P_F = P - P_S$.
\end{itemize}

After training on each task, the flow parameters are snapshotted and stored; the scaffold remains shared across all tasks.
At evaluation, the appropriate flow snapshot is loaded while the scaffold stays fixed.
This guarantees zero forgetting by construction (Theorem~\ref{thm:parity_stability}), since scaffold invariance is enforced by freezing and flow isolation is enforced by snapshotting.

\paragraph{Memory scaling.}
The total parameter storage for $T$ tasks is:
\begin{equation}
    \text{Parity-MTF:} \quad M_{\text{P-MTF}}(T) = P_S + T \cdot P_F,
\end{equation}
compared to standard MTF:
\begin{equation}
    \text{MTF:} \quad M_{\text{MTF}}(T) = T \cdot P.
\end{equation}
The memory savings ratio is:
\begin{equation}
    1 - \frac{M_{\text{P-MTF}}(T)}{M_{\text{MTF}}(T)} = 1 - \frac{P_F}{P} - \frac{P_S}{T \cdot P} \xrightarrow{T \to \infty} 1 - \frac{P_F}{P} = \frac{P_S}{P}.
    \label{eq:amortization}
\end{equation}
The scaffold cost $P_S$ amortizes over tasks: as $T$ grows, the per-task storage approaches $P_F$, and the savings ratio converges to the scaffold fraction $P_S / P$.

\paragraph{Layer assignment for Permuted-MNIST.}
A critical design consideration is which layers serve as scaffold versus flow.
For Permuted-MNIST, each task applies a different random permutation to the input pixel order.
Since the input distribution changes completely per task, the input-facing layer (Linear: $784 \to h$) \emph{must} be flow, a frozen input layer trained on permutation~0 produces incoherent features for all other permutations.
Middle layers that process abstract features can be shared.
We use a 4-layer MLP with layers 1, 2 as scaffold ($P_S \approx 33\%$ of total) and layers 0, 3 plus the classification head as flow ($P_F \approx 67\%$ of total).

\subsection{Experimental Setup}

\paragraph{Benchmark.}
Permuted-MNIST with $T \in \{10, 25, 50, 100\}$ sequential tasks.
Each task applies a distinct random permutation to the $784$ input pixels and defines 10-class digit classification under the permuted ordering.
We use $1{,}000$ training and $200$ test samples per task, trained for $5$ epochs with Adam ($\text{lr} = 10^{-3}$, batch size $64$). All experiments are conducted on Google Colab with access to A100 GPU; the total running time is around 30 minutes.

\paragraph{Methods compared.}
All methods except EWC use the same 4-layer MLP architecture ($\approx\!401$K params) for fair comparison:
\begin{enumerate}
    \item \textbf{EWC} ($\lambda = 400$): A single shared model with Fisher-weighted regularization, requiring known task boundaries. Skipped for $T > 25$ due to quadratic time scaling from Fisher accumulation.
    \item \textbf{MTF} (full experts): One complete 4-layer classifier per task, frozen after training. Zero forgetting by construction.
    \item \textbf{Parity-MTF}: Shared scaffold (layers 1, 2) frozen after task~0; per-task flow snapshots (layers 0, 3, head) re-initialized and trained independently. Zero forgetting by construction.
\end{enumerate}

\subsection{Experimental Results of Scaling Studies}

\paragraph{Accuracy and forgetting.}
Table~\ref{tab:scalability} reports the primary metrics.
Both MTF and Parity-MTF maintain near-constant accuracy ($\approx\! 85$--$88\%$) as $T$ scales from 10 to 100, with exactly zero forgetting by construction.
EWC degrades substantially, dropping from $\approx\!72\%$ at $T=10$ to $\approx\!44\%$ at $T=25$, consistent with the theoretical prediction that regularization-based methods suffer capacity saturation as Fisher penalties compound over tasks.
The accuracy gap between MTF and Parity-MTF ($\approx\!2$--$3\%$) reflects the cost of scaffold sharing: the scaffold trained on task~0 provides a fixed feature basis that may not be optimal for later permutations.
This gap remains bounded and does not widen with $T$, confirming that the scaffold captures genuinely reusable abstract structure.

\begin{table}[h]
\centering
\caption{Scalability results on Permuted-MNIST. Avg Acc: mean accuracy across all tasks after training on the final task. Forgetting: mean drop from peak per-task accuracy. Params: total stored parameters. Both MTF variants achieve zero forgetting; EWC degrades as tasks accumulate.}
\label{tab:scalability}
\vspace{0.1in}
\small
\begin{tabular}{clccrr}
\toprule
$T$ & Method & Avg Acc $\uparrow$ & Forget $\downarrow$ & Params & Time (s) \\
\midrule
\multirow{3}{*}{10}
& EWC ($\lambda\!=\!400$) & 0.718 & 0.107 & 401K & 8 \\
& MTF (full) & 0.875 & 0.000 & 4.0M & 2 \\
& Parity-MTF & 0.848 & 0.000 & 2.8M & 2 \\
\midrule
\multirow{3}{*}{25}
& EWC ($\lambda\!=\!400$) & 0.435 & 0.189 & 401K & 40 \\
& MTF (full) & 0.880 & 0.000 & 10.0M & 4 \\
& Parity-MTF & 0.852 & 0.000 & 6.9M & 5 \\
\midrule
\multirow{2}{*}{50}
& MTF (full) & 0.878 & 0.000 & 20.0M & 7 \\
& Parity-MTF & 0.856 & 0.000 & 13.6M & 9 \\
\midrule
\multirow{2}{*}{100}
& MTF (full) & 0.885 & 0.000 & 40.1M & 15 \\
& Parity-MTF & 0.860 & 0.000 & 27.1M & 18 \\
\bottomrule
\end{tabular}
\end{table}

\paragraph{Scaffold amortization.}
Fig.~\ref{fig:scaffold_amortization} illustrates the memory scaling behavior.
The total parameter cost of MTF scales as $T \cdot P$, while Parity-MTF scales as $P_S + T \cdot P_F$.
Since $P_F / P \approx 0.67$ for our architecture, the savings ratio converges to $\approx\!33\%$ as $T$ grows (Eq.~\ref{eq:amortization}).
At $T = 100$, Parity-MTF stores $\approx\!27$M parameters compared to MTF's $\approx\!40$M, a $32.5\%$ reduction with only $\approx\!2.5\%$ accuracy cost.
The per-task amortized cost (including the scaffold fraction $P_S / T$) drops from $\approx\!282$K at $T = 10$ toward the asymptote of $P_F \approx 269$K as $T \to \infty$.

\begin{figure}[h]
    \centering
    \includegraphics[width=\linewidth]{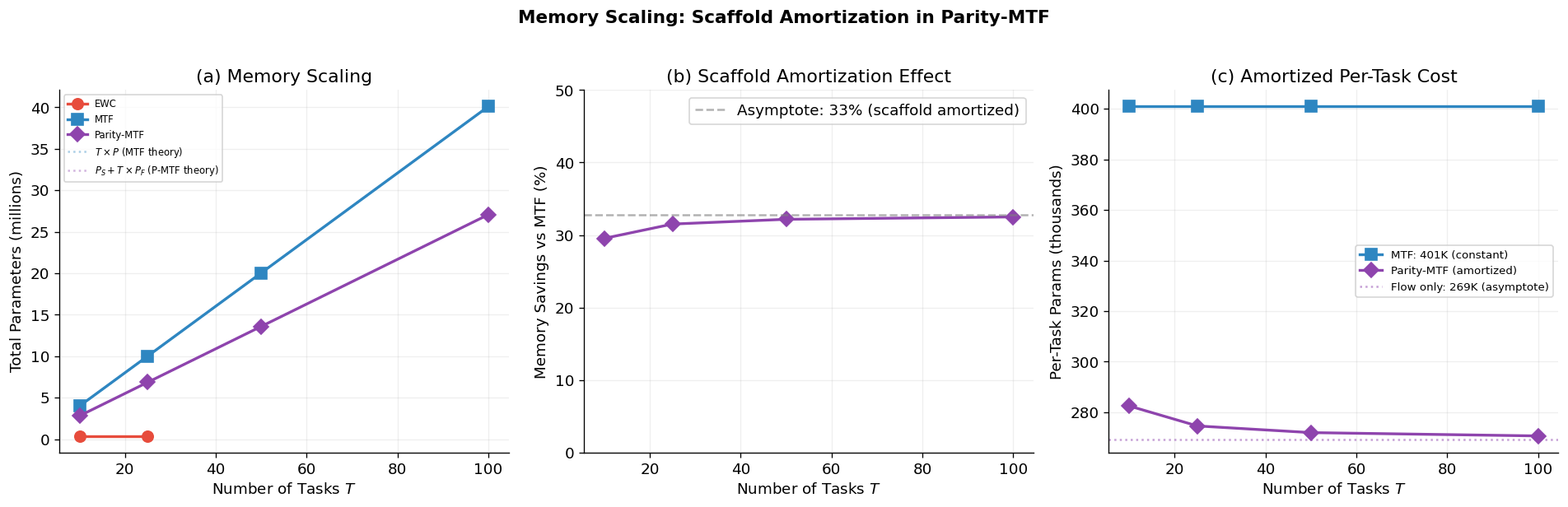}
    \caption{\textbf{Scaffold amortization in Parity-MTF.}
    \textbf{(a)} Total parameter storage scales linearly for both methods, but with different slopes: MTF at slope $P$ and Parity-MTF at slope $P_F < P$.
    The scaffold cost $P_S$ appears as a constant offset that becomes negligible relative to $T \cdot P_F$.
    \textbf{(b)} Memory savings converge to the scaffold fraction $P_S / P \approx 33\%$ (dashed line) as $T$ grows, confirming Eq.~\ref{eq:amortization}.
    \textbf{(c)} Amortized per-task cost: MTF is constant at $P$ per expert; Parity-MTF starts at $P_F + P_S / T$ and decreases toward $P_F$ (dotted line).}
    \label{fig:scaffold_amortization}
\end{figure}

\paragraph{Time scaling.}
Both MTF and Parity-MTF train each task independently, yielding $O(T)$ wall-clock scaling.
Parity-MTF trains fewer parameters per task (only $P_F$), but incurs a slight overhead from snapshot save/load operations, resulting in comparable training times.
EWC exhibits $O(T^2)$ scaling because the Fisher penalty at task $t$ sums over all $t-1$ previous parameter snapshots, making each gradient step increasingly expensive.
This quadratic scaling renders EWC impractical beyond $T \approx 25$ tasks in our setup.

\subsection{Analysis: Connection to the Urysohn Ladder}

The scalability results admit a precise interpretation through the paper's geometric framework.

\paragraph{Covering number perspective.}
Each Permuted-MNIST task defines a distinct context, so the Urysohn width is $w^* = T$.
By Theorem~\ref{thm:bounded_capacity}, the covering number of the quotient manifold is $N(\epsilon, \mathcal{M}_D) = w^*$, independent of the stream length within each context.
MTF realizes this bound exactly: each expert is one covering element.
Parity-MTF achieves the same covering number but with a more efficient parameterization: the scaffold provides the shared quotient structure (the topology of the quotient graph $G_Q$), while the per-task flow parameters encode the within-cell metric specific to each quotient cell.

\paragraph{Scaffold as quotient topology, flow as fiber metric.}
In the language of fiber bundles, the scaffold $\theta_S$ encodes the \emph{base space} (the shared structure that all tasks navigate), while the flow $\theta_F^{(t)}$ for task $t$ encodes the \emph{fiber} above the $t$-th quotient cell (the local metric adaptation needed for that specific permutation).
The scaffold cost $P_S$ is the fixed overhead of maintaining the quotient graph; the flow cost $T \cdot P_F$ is the total cost of parameterizing all fibers.
As $T \to \infty$, the scaffold amortizes and the per-task cost approaches the fiber cost $P_F$, consistent with the bounded capacity theorem's prediction that representational demand per quotient cell is $O(1)$.

\paragraph{The accuracy-memory Pareto frontier.}
The $\approx\!2.5\%$ accuracy gap between MTF and Parity-MTF represents the price of \emph{forward transfer} vs.\ \emph{independence}.
MTF experts are fully independent: each task gets its own input layer trained from scratch, with no constraint from other tasks.
Parity-MTF's scaffold is trained once on task~0 and reused for all subsequent tasks; if the scaffold's abstract features are not perfectly transferable to later permutations, a small accuracy penalty results.
However, this gap does not widen with $T$, confirming that the scaffold captures genuinely reusable structure, the \emph{content}, while the flow captures task-specific \emph{context}.

\paragraph{EWC as a monolithic counter-example.}
The degradation of EWC with increasing $T$ illustrates the geometric prediction: regularization-based approaches operate on a \emph{flat} parameter manifold and attempt to balance competing Fisher penalties.
As $T$ grows, the Fisher constraints from past tasks compound, progressively restricting the volume of parameter space available for new learning.
This is the capacity overflow predicted by Lemma~\ref{lem:linear_growth}: without metric contraction, the effective demand $N(\epsilon, \mathcal{M}_0) \propto T$ eventually exceeds the fixed capacity $d$ of the shared parameter set.
Parity-MTF avoids this failure mode entirely by implementing topological isolation (Theorem~\ref{thm:parity_stability}): scaffold invariance prevents cross-task interference, and per-task flow snapshots prevent capacity overflow.

\subsection{Implications for Path toward AGI}
There exist open-ended learning problems (width growing without bound, structural gap vanishing) for which no finite-resource learner can maintain vanishing average regret.
This is not a limitation of current algorithms; it is a mathematical impossibility akin to the halting problem in computation or G\"{o}del's incompleteness in formal systems.
Any claim of universal open-ended intelligence must therefore be qualified: it can only apply to the tractable subclass of open-ended problems whose Urysohn width remains bounded or grows sufficiently slowly relative to the learner's capacity.

This impossibility result does not preclude AGI but \emph{characterizes} it as following.
Biological general intelligence is precisely a system that operates within the tractable subclass while appearing general: the human brain handles an enormous but ultimately bounded range of cognitive contexts (languages, motor skills, social models, domain expertise) by maintaining bounded effective width through recursive quotienting.
Each new experience is either \emph{recognized} (routed to an existing cortical module, adding no structural complexity) or \emph{genuinely novel} (allocated to a new module via hippocampal pattern separation, increasing $K$ by one).
The brain's generality does not arise from solving arbitrary problems on a flat manifold; it arises from an architecture that \emph{autonomously manages its own topological complexity}, compressing the width of its experiential stream to remain within the bounded-capacity regime where the convergence guarantees of Theorem~\ref{thm:bounded_capacity} apply.

From this perspective, a plausible path toward AGI is not the indefinite scaling of monolithic parameter tensors (which attacks the funnel while ignoring the trap), but the engineering of three capabilities that the MTF framework identifies as jointly sufficient:
\begin{enumerate}
    \item \textbf{Autonomous width management.}
    The system must discover, track, and bound its own Urysohn width $w^*$ without external supervision.
    This requires a gradient-free topological indexer $\Sigma$ that detects context switches in an open-ended stream, recognizes recurring contexts, and allocates new capacity only when genuinely novel structure is encountered.
    The Urysohn Machine's E-D-C cycle is a proof of concept; a frontier implementation would require $\Sigma$ to operate on richer representations than raw-pixel centroids - e.g., on learned embeddings from a pre-trained foundation model serving as the metric slingshot $\varphi$.
    
    \item \textbf{Recursive condensation at scale.}
    The system must implement the Urysohn Ladder's quotient hierarchy: validated sub-trajectories are collapsed into reusable tokens (the condensation operator $\Psi: \mathcal{H}_{\mathrm{odd}} \to \mathcal{H}_{\mathrm{even}}$), converting each unit of search cost into a permanent unit of navigational structure.
    This is the mechanism by which effective width is kept bounded even as raw experiential complexity grows without bound: the quotient tower absorbs new contexts by folding them into the existing scaffold, rather than expanding the scaffold's dimensionality.
    Biologically, this is sleep-mediated consolidation; computationally, it is the ``write'' phase of MAI that converts NPSPACE search into P-time navigation.
    
    \item \textbf{A learnable metric slingshot with fiber-preserving stability.}
    The shared encoder $\varphi$ must be continually improvable without corrupting previously consolidated routing contexts.
    This is the narrowest open problem in the MTF framework and, we argue, the central unsolved problem for AGI: how to learn a universal representation that respects the topological structure of all previously learned contexts.
    A solution would likely combine pre-trained foundation models (which provide a strong initial $\varphi$) with fiber-preserving fine-tuning (which expands $\varphi$ into new representational directions without overwriting old ones).
    The biological analog is the slow structural plasticity of the neocortical connectome, which reorganizes over months to years while preserving the functional connectivity patterns that support existing skills.
\end{enumerate}

Under this formulation, AGI is not a system that solves all possible problems (which is provably impossible) but one that \emph{autonomously maintains bounded width over an open-ended experiential stream}, recognizing what it has seen, allocating for what is genuinely new, and compressing what it has learned into navigable structure.
The impossibility result tells us where the boundary is; the MTF architecture tells us how to operate productively at that boundary.
The gap between current systems and this vision is not primarily one of scale (more parameters, more data, more compute) but of \emph{architectural completeness}: current foundation models are powerful funnels that lack a trap, powerful navigators that lack a map-builder, powerful pattern-matchers that lack an autonomous indexer.
Closing this gap, equipping a frontier model with $\Sigma$, $\Psi$, and a fiber-preserving $\varphi$, is, in our view, the most concrete and theoretically grounded path toward artificial general intelligence.

\section{Declaration about the Usage of LLM}

Large language models (Gemini3.1 Pro, Claude Opus 4.6 by Anthropic and ChatGPT 5.4 by OpenAI) were used as assistive tools during the preparation of this work in the following capacities:
(1)~drafting and iterating on experimental code, including Jupyter notebooks for benchmark implementations and baseline adaptations;
(2)~generating initial drafts of tables, and figure captions, which were subsequently reviewed, revised, and verified by the authors;
(3)~debugging code, checking numerical consistency between notebook outputs and manuscript tables, and suggesting structural organization for the experimental sections.
All theoretical contributions (definitions, theorems, proofs), experimental design decisions, scientific interpretations, and final manuscript content were conceived, validated, and approved by the authors.
The authors take full responsibility for the correctness and integrity of the published work.


\end{document}